\documentclass{article}
\usepackage{iclr2027_conference,times}
\usepackage{amsmath,amssymb,amsthm}
\usepackage{graphicx}
\usepackage{booktabs}
\usepackage{multirow}
\usepackage{xcolor}
\usepackage[hidelinks]{hyperref}
\usepackage{url}
\usepackage{fancyhdr}

\newcommand{\thetaz}{\theta_0}
\newcommand{\forget}{\mathcal{D}_f}
\newcommand{\retain}{\mathcal{D}_r}
\newtheorem{definition}{Definition}
\newtheorem{proposition}{Proposition}

\newenvironment{widefigure}[1][t]{\begin{figure}[#1]}{\end{figure}}
\newenvironment{widetable}[1][t]{\begin{table}[#1]}{\end{table}}

\title{Rethinking Demonstration Unlearning in\\
Imitation Learning for Robotics}

\author{Jiazhuo Li\thanks{Equal contribution.\quad
$^{\dagger}$Corresponding author.} \\
University of Michigan \\
\texttt{jiazhuo@umich.edu} \\
\And
Yu Zhang$^{*}$ \\
Tsinghua Shenzhen International\\
Graduate School \\
\And
Yiming Fei$^{*}$ \\
Zhejiang University \\
\AND
Kangkang Dong$^{\dagger}$ \\
Tsinghua Shenzhen International\\
Graduate School \\
\texttt{dongkangkang@sz.tsinghua.edu.cn} \\
\And
Xiaojun Zhu \\
Tsinghua Shenzhen International\\
Graduate School \\
\And
Houde Liu \\
Tsinghua Shenzhen International\\
Graduate School \\
\And
Jinze Tao \\
Wuhan University of Technology
}

\iclrfinalcopy %

\begin{document}
\maketitle
\fancyhead{}\lhead{Preprint}\thispagestyle{fancy}
\begin{abstract}
Imitation learning for robotics depends on human demonstrations, some of
which people may later ask to remove. Retraining without them is the
natural reference, but its cost grows with policy and dataset scale,
motivating cheaper operators that edit a trained policy. Metrics
inherited from machine unlearning, such as forgetting loss or a single
membership attack, do not establish what an edit removed from a policy
acting in closed loop. We therefore introduce a retrain-calibrated audit
that reads demonstration unlearning along two axes: behavior, whether the
edited policy acts like one retrained without the removed demonstrations,
and evidence, whether an auditor can still detect it was trained on them.
The behavior axis measures action divergence to that retrain at matched
states, calibrated by a floor built from independent retrains, so a
policy at the floor is as close to a retrain as retrains are to each
other. The evidence axis applies a per-demonstration membership attack
against a retrain null, reporting both its rank and its absolute
member-loss level, since rank alone accepts operators that inflate member
losses past the null. A conformal test then combines both axes into one
hypothesis of joint retrain consistency, against a fleet of independent
retrains large enough to reject at conventional significance. Across five
preregistered conditions on three real-robot policy classes and two
simulation suites, the axes dissociate in both directions on one
checkpoint, as an edit may repair task behavior while leaving evidence
unchanged, or reduce evidence while moving behavior away from retraining.
On the ACT arm, a redirect edit restores blind-scored robot success to
18 of 20 trials.
\end{abstract}

\section{Introduction}
\label{sec:intro}

Imitation-learned robot policies \citep{pomerleau1989alvinn} are trained on
demonstrations collected from people, and people can withdraw consent.
Regulation increasingly requires removing a trained model's dependence
on withdrawn data without the cost of full retraining; for a \emph{policy}
in a feedback loop it is not even clear what the request means. We ask the
mechanism question: \emph{when a demonstration is ``unlearned'' from a robot
policy, what exactly is gone?}

\begin{widefigure}[t]
\centering
\includegraphics[width=0.79\linewidth]{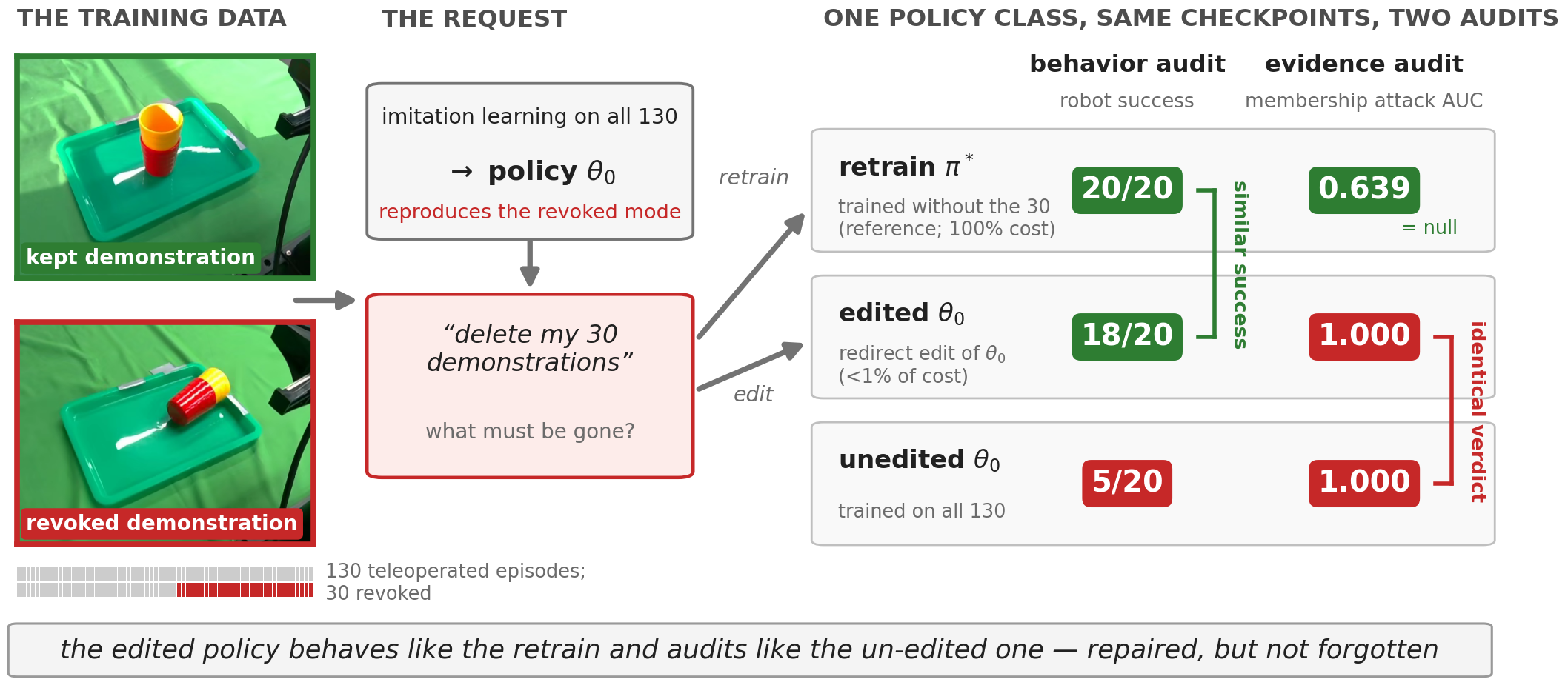}
\caption{\textbf{Demonstration unlearning in imitation learning, and why one
audit is not enough.} \emph{Left}: real frames from the shared teleoperated
dataset --- a kept demonstration seats the cup; the 30 later-revoked ones
teach release at a displaced point that topples it. \emph{Middle}: the policy
trained on all 130 absorbs the revoked mode; the request admits retraining
without the 30 (the reference $\pi^*$) or editing the weights (shown:
\emph{redirect}; the operator ladder and budget-matched FT control are in
Sec.~\ref{sec:five}). \emph{Right}: every number is ACT, and each row's two
verdicts score the \emph{same checkpoint}: similar observed task success to
the retrain's (blind-scored; Table~\ref{tab:trials}), membership-attack AUC identical to the un-edited policy's. The
grouping switches between columns --- repaired, but not forgotten
(Sec.~\ref{sec:dissoc}).}
\label{fig:workflow}
\end{widefigure}

We first quantify how badly the natural answers disagree, on one policy
class, one contamination set, one audit suite (ACT, chunk-50, 30 of 130
demonstrations): a redirect edit restores 18/20 task success on the robot
(retrain: 20/20) and closes a quarter of the executed-slice action gap to
the retrain floor --- and its membership-attack AUC is 1.000,
indistinguishable from the un-edited policy (retrain null 0.639).
Gradient ascent runs it backwards on that lineage --- rank audit through
the null, behavior moving \emph{farther} from the retrain than where it
started --- yet on two further independently trained $\thetaz$ seeds the
same budgets leave the rank audit still flagging (Sec.~\ref{sec:dissoc}).
Same audits, same operators, opposite verdicts.
Table~\ref{tab:ledger} (appendix) collects twelve such measured
contradictions.

The root cause is that ``deleted'' conflates two axes. \textbf{Behavior}: the
policy's conduct approaches what a policy retrained without the
forget set would do. \textbf{Evidence}: an auditor with query access can no
longer distinguish the edited policy from that retrain. Different instruments,
different operators, and --- our central result --- moved independently in
both directions by operators in current use (Table~\ref{tab:ledger}).
The claim this paper defends is structural:
\textbf{retrain-equivalent deletion is a conjunction, not a one-axis score.}
Behavior can move far toward the retrain while membership evidence stays
unchanged; a rank audit can accept checkpoints that absolute evidence and
counterfactual behavior reject; no edit we audit is jointly
retrain-consistent. Any one-axis evaluation can return an erroneous
deletion verdict.

\textbf{Contributions.}
\begin{itemize}
\item \textbf{A retrain-calibrated joint criterion and its refutation-only
reading}: behavior against a \emph{floor} of independent retrains, evidence
against a retrain
\emph{null} in rank \emph{and} absolute form --- rank alone is provably
overshoot-blind (Proposition~\ref{prop:rank}); a deletion claim requires
both; the audit refutes claims but cannot certify them, and its conformal
instantiation against a $19$-replica retrain fleet rejects every audited
ACT checkpoint at $p{=}0.05$. The paired hardware protocol, audit suite,
and ledger (Table~\ref{tab:ledger}) are released.
\item \textbf{A same-checkpoint double dissociation, and an
uncontrolled operator.} On ACT, the redirect that restores 18/20 hardware
success and closes 24.9\% of the executed-slice offline gap carries
evidence identical to the un-edited policy (AUC 1.000, null 0.639) ---
replicated on three independently trained $\thetaz$ seeds --- while
ascent's evidence endpoint is set by its self-referential stopping rule,
not its budget: the same ladder lands at \texttt{mem/null}
$1.83/0.91/1.19$ across those seeds.
\item \textbf{Cross-class evidence that standard verdicts each admit false
deletions.} Task success ($\pi_{0.5}$ FT: 18/20 on hardware at rank AUC
1.000, offline conduct moved 4.6\%),
forgetting loss (DP: $32\times$ loss-gap closure at zero action movement),
and rank acceptance (three classes) each pass checkpoints the joint criterion
rejects; a prospective PushT pre-registration is refuted at its frozen dose,
and operator \emph{applicability} proves a measurable property of the
request itself (3.65\% editable intersection vs.\ a pre-set 5\% bar).
\end{itemize}

\section{Related Work}
\label{sec:related}

\textbf{Machine unlearning and its audits.} Exact and certified unlearning
define deletion against retraining
\citep{bourtoule2021sisa,guo2020certified,ginart2019making,sekhari2021remember};
the LLM line optimizes forgetting objectives at scale
\citep{eldan2023whos,maini2024tofu,zhang2024npo,li2024wmdp}; membership inference is the
standard evidence instrument \citep{shokri2017membership,carlini2022lira};
graph unlearning already carries the same suspicion of surface-level verdicts
\citep{zhang2026inversion,zhang2026attackby,zhang2024partial}. Our
contribution to this line is the calibration discipline for \emph{policies}
--- both axes read against retraining, floors for behavior and nulls for
evidence, rather than against zero --- and the demonstration that the two
calibrated axes dissociate.

\textbf{Editing and steering robot policies.} Behavior Uncloning
\citep{more2026}, concurrent, distills mode redirection into policy weights
with a retain loss (its MoRE operator), scored by success against a
filtered-data retrain on DP and $\pi_{0.5}$ hardware --- the operator
closest to our redirect, no deletion semantics, no evidence axis: precisely
the conflation we audit. RedFlow \citep{redflow2026} shares the word, not
the goal: it converts failure experience into corrective supervision for
flow-matching VLAs, with no forget set and no auditor.
VLA-Forget \citep{vlaforget2026} is the closest unlearning work, running
GA/NPO-family operators on OpenVLA-scale backbones
\citep{kim2024openvla} at 30\% forget --- on a
\emph{language-prompted} PushT variant, so its setting is not directly
comparable to our plain diffusion-PushT arm. In RL, TrajDeleter
\citep{trajdeleter2024} forgets trajectories through the critic, and
concurrent non-archival work applies Fisher-weighted forgetting to diffusion
policies \citep{rff2025}; from-scratch BC has no critic, so the lineage does not
transfer.
None pairs demonstration-identity deletion with a hardware evaluation and a
retrain-calibrated evidence audit; that joint slot is the one this paper
occupies.

\textbf{Attribution and data curation for IL.} Influence-function methods
\citep{koh2017understanding}, CUPID \citep{cupid2025}, and DataMIL
\citep{datamil2025} select or attribute demonstrations for performance; they answer ``which data mattered,'' not ``is it gone.'' Our
Stage-1 identification (loss scorer, $P@60{=}1.000$ on corr60) borrows their
machinery for the complementary problem.

\section{What Deletion Requires: A Retrain-Calibrated Definition}
\label{sec:setup}

\textbf{Demonstration-level unlearning.} A training procedure $\mathcal{T}$
fits $\pi_{\thetaz} = \mathcal{T}(\mathcal{D})$ on
$\mathcal{D} = \retain \cup \forget$; $\mathcal{T}$ may train from
scratch or adapt a frozen pretrained backbone with corpus disjoint from
$\mathcal{D}$ --- we require only that $\mathcal{D}$ is known and re-running
$\mathcal{T}$ affordable. A deletion request names $\forget$ by identity
(episode indices), not by an oracle behavior label; an operator produces
$\pi_\theta$ from $(\pi_{\thetaz}, \forget, \retain)$ at compute
$\ll$ re-running $\mathcal{T}$. The reference object throughout is the
\textbf{retrain counterfactual} $\pi^* = \mathcal{T}(\retain)$ --- from
scratch for ACT and DP; for $\pi_{0.5}$ the same LoRA adapter re-fit on
$\retain$ from the same frozen base. ``Success'' is proximity to $\pi^*$
--- \emph{not} task-success maximization, which a harmful-mode deletion may
even reduce.

\begin{definition}[Behavior axis]
Deployed-path action divergence between $\pi_\theta$ and $\pi^*$ at matched
states, reported per state region (PRE / BRANCH / POST / TAIL of the
contaminated trajectories) in raw action units, and calibrated by the
\textbf{floor}: the same divergence measured between independent retrains
$\pi^*_i, \pi^*_j$. A model at the floor is as close to a retrain as retrains
are to each other. This is an \emph{offline counterfactual-proximity} audit at banked
states; closed-loop success is a separate manifestation channel, never
equated with it (Sec.~\ref{sec:hw}).
\end{definition}

\begin{definition}[Evidence axis]
Membership advantage of an auditor distinguishing $\forget$ members from
matched non-members through $\pi_\theta$, calibrated by the \textbf{retrain
null} (the same attack run on $\pi^*$), reported as (i) rank AUC and (ii) an
absolute member-loss level relative to the null (\texttt{mem/null}). Rank
statistics are scale-free but blind to overshoot; the pair is the audit.
Evidence claims are scoped to the tested loss-based auditor family.
\end{definition}

\subsection{Four outcomes, and why no single audit separates them}
\label{sec:outcomes}
Because the axes are independent an edited policy occupies one of four
quadrants. \textbf{(1) Behavioral repair}: conduct approaches $\pi^*$ while the
auditor still separates members --- the policy stops misbehaving and stays
fully attributable. \textbf{(2) Rank-audit acceptance}: the membership
\emph{ranking} stops distinguishing members, which certifies nothing, since a
rank statistic is invariant to any monotone inflation of both populations.
\textbf{(3) Absolute overshoot}: the member-loss level moves \emph{past} the
null, leaving the policy anomalously bad at data it once fit --- an
anomaly a two-sided auditor reads as evidence of \emph{editing} (inconsistency
with retraining; not, by itself, proof the original memory persists). \textbf{(4) Joint
retrain consistency}: conduct at the floor \emph{and} both tested evidence
statistics consistent with the retrain null at once. Only (4) is eligible for
a deletion claim under this audit --- the minimum condition our protocol
can check, not proof the memory is gone; a rank-only audit reports (2) as
success and cannot tell (2) from (3). Hence rank and absolute levels are reported as a pair throughout, and the
two-axis matrix of Table~\ref{tab:matrix} is the central claim in table
form: no tested checkpoint reaches (4). The pairing is a fact, not a hedge:
any ROC/AUC verdict is invariant under strictly increasing score
transformations, so ``at the null'' and ``past it'' can be
rank-indistinguishable (Proposition~\ref{prop:rank}, appendix;
Table~\ref{tab:evladder} realizes it on B200 vs.\ B300).

The conjunction is testable as \emph{one} hypothesis, not an AND of
marginal checks. Collect the audit into a vector
$z(\theta) = [\,\text{BRANCH divergence},\ |\log \texttt{mem/null}|,\
|\mathrm{AUC} - \text{null}|\,]$ and ask whether $z(\theta)$ is
exchangeable with the retrain fleet's $\{z(\pi^*_i)\}$: score each
replica's standardized sup-norm distance from the others (leave-one-out),
score the edit against all $K$, and report the conformal
$p = (1 + \#\{\text{replicas at least as atypical}\})/(K{+}1)$. The
test's floor is $1/(K{+}1)$ --- one-sided toward refutation, never
certification --- so we trained the retrain fleet out to $K{=}19$
replicas, putting the floor at $1/20{=}0.05$: refutation at conventional
significance is attainable, certification still is not. Measured
(Appendix~\ref{app:joint}): every audited ACT checkpoint is rejected at
$p{=}0.050$, each more atypical than all $19$ replicas, so every $p$
sits at the fleet's attainable minimum; the effect sizes carry the
margin --- checkpoint nonconformity $16$--$57$ against a replica range
of $0.4$--$3.3$ (median $1.0$), redirect nearest, ascent farthest. The
rank margin agrees ($0.105$ vs.\ minimum resolvable $0.203$); the
asymmetry is structural, and no axis trades against the other.

\section{Experimental Design}
\label{sec:design}
One semantic deletion-request family instantiated across settings ---
mode-level contamination named by episode identity --- five conditions, one
audit: three robot classes, two sim arms. Every result carries one of four
statuses: \emph{pre-registered confirmatory} (P1--P5,
Appendix~\ref{app:prereg}), \emph{amended before execution} (the P2
feasibility gate), \emph{exploratory} (the over-budget PushT rung), or
\emph{invalidated} (the unstable-lr sweep, Appendix~\ref{app:lrcliff}).

\subsection{Deletion requests and contamination}
\label{sec:requests}
\emph{Contamination.} 30 of 130 demonstrations teach a coherent wrong
mode of the same task (transport toward a displaced release point; on
hardware this topples the target object). The mode is \emph{plausible} ---
mode-level, not noise; zero-mean corruptions are a regime where fine-tuning
wins and are excluded by scope.

Two simulation arms complete the matrix. robomimic-BC (Can/Square) uses a
constructed contamination split (corr60) and a natural worse-operator split.
Diffusion-PushT uses \emph{mirror contamination}: 80 of its 286 training
episodes are reflected about the vertical axis, so each mirrored
demonstration pushes the T toward the mirrored goal (Fig.~\ref{fig:pusht}a)
--- a coherent wrong-goal mode, not noise. The reflection is exactly rigid-body-consistent (Kabsch residual
0.0000; Appendix~\ref{app:sim}). $\thetaz$ trains on all 286 episodes (3 seeds),
the ceiling on the 206 clean ones (5 seeds); full protocol in
Appendix~\ref{app:sim}.

\subsection{Five conditions, three policy classes}
\label{sec:five}
Each policy class instantiates the same pre-registered ladder:
\textbf{ceiling} $\pi^*$ (retrain on $\retain$; plus 2--3 extra
retrain seeds forming the floor), \textbf{$\thetaz$} (trained on all of
$\mathcal{D}$), \textbf{redirect} (edit toward retrieved clean counterfactual
actions at branch states; the operator family of MoRE \citep{more2026}),
\textbf{ascent}
(calibrated gradient ascent on $\forget$), and \textbf{FT} (fine-tuning on
$\retain$ at the same budget --- the control that separates ``unlearning''
from ``more clean training'').

Classes: ACT \citep{zhao2023act} (chunked $L_1$ regression,
chunk 50), Diffusion Policy \citep{chi2023diffusion} (DDPM, action horizon
16), and $\pi_{0.5}$ \citep{intelligence2025pi05,black2024pi0}
(3B-parameter flow-matching VLA, LoRA \citep{hu2021lora}); training and
deployment through LeRobot \citep{cadene2024lerobot}, simulation arms on
robomimic \citep{mandlekar2021robomimic}. ACT and DP train from scratch;
$\pi_{0.5}$ is LoRA-adapted from a frozen pretrained base (Sec.~\ref{sec:setup}),
scoping its deletion claim to the adapter's training set. All three use the
same 130-episode cup-transport dataset collected on an AgileX PiPER arm
(dual RealSense, 30 Hz); the same 770 branch-window frames and retrieved
clean targets port across classes unchanged.

\subsection{Offline audits, conventions, and controls}
\label{sec:offline}
\emph{Measurement conventions.} Every action-space number carries its
signature $[\text{units},\ \text{reference},\ \text{region},\ \text{horizon},
\ \text{samples}]$; horizons as measured: $\pi_{0.5}$ its full executed
50-step chunk, DP its executed 8-step slice, and ACT its \emph{executed
25-step slice} --- the primary convention; ACT's full 50-step predicted plan
is reported as a diagnostic, because $\thetaz$'s planned divergence ramps
late in the chunk (Fig.~\ref{fig:manifest}a): the BRANCH gap is $2.2$ raw
units on the executed slice vs.\ $7.9$ full-plan, so redirect's closure
reads 24.9\% (primary) vs.\ 75.1\% (full plan), while operator ordering,
every ascent sign, and every evidence number are convention-independent
(both tables from the same banks; Appendix~\ref{app:floors}). That the
closure fraction is convention-dependent while closed-loop repair is not is
itself a manifestation-blindness exhibit. Full conventions: Appendix~\ref{app:conv}.

\emph{Banks and floors.} For each class we bank sampled action chunks at
matched observations across all conditions (common random numbers, 4 samples
per observation, own-postprocessor raw units), label regions, and score
divergence per region against the ceiling with paired bootstrap CIs. The
ACT c50 floor, from 3 independent retrains (3 pairs), executed-slice raw
joint units:
BRANCH $1.81\pm0.07$, POST $2.01\pm0.08$, PRE $2.83$, TAIL $7.27\pm0.97$
(pooled forget $3.77$; full-plan values: Appendix~\ref{app:floors}); the
DP floor uses 4 retrains (6 pairs)
and the PushT behavioral null uses 5 (below); the ACT retrain seeds nest
--- floor $\subset$ evidence null $\subset$ the $K{=}19$ conformal fleet
(Appendix~\ref{app:joint}). The per-region floor is load-bearing: floors
span $4.0\times$ across regions, and a pooled floor would misread the best
redirect as at the null. A normalized-space replication of the full table
(same ordering, different scale) is in the appendix as the cross-check.

\emph{Membership audits.} Per-demonstration loss-based MIA \citep{yeom2018privacy,salem2019mlleaks} with
bootstrap-CI AUC against the pooled retrain null (all floor seeds join the
null), plus the absolute \texttt{mem/null} column. Per-demonstration
scores average 64 sampled frames, and the frame draw is itself a noise
source: a $5{\times}3$ seed-by-draw decomposition of the ACT null puts the
frame-draw SD ($0.039$) \emph{above} the retrain-seed SD ($0.021$; total
per-reading SD $0.052$), so the null is the grand mean of all 15 readings ($0.639$) and the
margin $2\,$SD $= 0.105$, not the pre-registered seed-only $0.074$.
$\pi_{0.5}$ and PushT cells are all measured. PushT's non-members must be
synthesized (every mirrored episode is in training), so its null doubles as a
validity check on that construction --- see Sec.~\ref{sec:generalize}.

\emph{Guards.} Retain-loss ceiling, held-out contaminated-calibration
guard (catches collateral the retain guard misses), and budget-matched FT
read alongside every edit.

\subsection{Real-robot protocol}
\label{sec:robotproto}
Paired design: a fixed sequence of 20 initial cup positions, identical
across conditions (trial $k$ matches), set by a removable template so no
fiducial appears in any observation; collection is \emph{sequential per
model} on both arms, in five-episode blocks with the identical position
sequence (server-restart cost forbids finer interleaving on $\pi_{0.5}$;
the ACT session ran likewise) --- so
paired contrasts share initial positions, not collection time, and session
drift is a disclosed confound rather than a randomized-out one, consistent
with the session-dependence finding of Sec.~\ref{sec:hw}; episodes
Enter-gated, capped at 25 s; live keys are
provisional --- outcomes are scored blind from video with a failure-mode
taxonomy; continuous outcomes (joint trajectories, 30 Hz, raw degrees) are
the primary quantitative channel: paired, interval-honest small-$n$ hardware
evaluation \citep{tri2025lbm,step2025stopping,chi2024umi}. A planned
physical membership probe was superseded pre-collection by the
release-angle AUC of Sec.~\ref{sec:hw}. The simulation arm's
pre-registered predictions and their outcomes are
verbatim in Appendix~\ref{app:prereg}; the hardware protocol was frozen
2026-08-10 and twice amended pre-collection, and its verbatim text and
amendment log accompany the released artifacts.

\section{Results}
\label{sec:results}

\subsection{The two axes dissociate on the same checkpoints}
\label{sec:dissoc}
Both axes, one class, one contamination set, one audit suite: every contrast
below is the \emph{same checkpoint} scored twice, and Table~\ref{tab:matrix}
(appendix) extends the same two-axis reading to every audited checkpoint
under each arm's own pre-declared instruments --- none places both axes
inside its retrain band.

Redirect moves toward the retrain counterfactual, ascent away: in raw
units against the 3-seed floor on the executed slice,
R200 closes 24.9\% of the $\thetaz$-to-floor BRANCH gap
(75.1\% full-plan --- the late-ramp diagnostic of Sec.~\ref{sec:offline};
the floor is reached under neither convention). R400 is the same within
noise while retain damage grows ---
redirect saturates at its budget. No ascent rung approaches the floor, and
beyond B50 ($+1.5\%$, inside the floor's own noise) the ladder degrades
monotonically ($-14.0\%$ at B100 to $-182\%$ at B400).

Fine-tuning at matched budget does neither.
FT200 moves $+1.6\%$ toward the floor %
and closes $-3.6\%$ of the loss gap %
(DP: 0.19\% at 200, 5.9\% at 500 --- FT ``forgetting'' grows with
budget; subtract it from every edit's claim).

The full rung-by-rung evidence ladder is Table~\ref{tab:evladder}: one
attack, one null pooled over 5 retrain seeds $\times$ 3 frame draws
(grand mean $0.639$, per-reading SD $0.052$, margin $0.105$;
Sec.~\ref{sec:offline}). The elevation above chance is measured to be
benign: scoring the \emph{entire} 50-episode pool and permuting the
member/non-member labels puts every retrain seed mid-distribution among
random balanced regroupings ($p = 0.55$--$0.97$; no-membership band
$[0.24, 0.76]$ at this pool size), with no session-order drift
($|\rho| \le 0.10$) --- small-pool sampling width, not group identity:
exactly what calibrating against the measured null absorbs. The cross-arm control agrees: DP and $\pi_{0.5}$ audit the
\emph{same} 30/10 pools and read $0.534 \pm 0.012$ and $0.558 \pm 0.012$
(Sec.~\ref{sec:generalize}). Against that, $\thetaz$, redirect, and FT
sit at $1.000$ (permutation $p < 10^{-3}$).
\begin{table}[t]
\centering
\caption{\textbf{The ACT c50 evidence ladder (primary $\thetaz$
lineage)}, one attack, one null pooled over 5 retrain seeds $\times$ 3
frame draws (grand mean $0.639$, margin $0.105$; Sec.~\ref{sec:offline}).
Rank is scale-free but overshoot-blind; the absolute \texttt{mem/null}
level is not. No rung of this ladder lands on the null in absolute terms:
every episode-bootstrap CI excludes $1.0$ (B200 $[0.79, 0.89]$; B300
$[1.48, 1.67]$; B400 $[1.72, 1.94]$). The $1.000$ boundary readings
replicate on three independently trained $\thetaz$ seeds; the ascent
endpoint does not (Sec.~\ref{sec:dissoc}).}
\label{tab:evladder}
\scriptsize
\setlength{\tabcolsep}{4pt}
\begin{tabular}{@{}lccc@{}}
\toprule
 & rank AUC [95\% CI] & vs.\ null & \texttt{mem/null} \\
\midrule
$\thetaz$ / FT200 / R200 & $1.000$ [1.00, 1.00] & $+0.36$ & $0.22$ / $0.19$ / $0.40$ \\
R400             & $0.990$ [0.96, 1.00] & $+0.35$ & $0.44$ \\
B50              & $0.997$ [0.98, 1.00] & $+0.36$ & $0.38$ \\
B100             & $0.940$ [0.86, 1.00] & $+0.30$ & $0.57$ \\
B200             & $0.650$ [0.44, 0.84] & $+0.01$ & $0.84$ \\
B300             & $0.557$ [0.34, 0.76] & $-0.08$ & $1.59$ \\
B400             & $0.533$ [0.30, 0.75] & $-0.11$ & $1.83$ \\
\bottomrule
\end{tabular}
\end{table}

Two readings.

(i) The double dissociation is within-campaign, on the primary
$\thetaz$ lineage: the redirect that closed a quarter of the executed
behavioral gap carries loss-attack evidence indistinguishable from the
un-edited policy, while ascent's rank evidence crosses the null between
its 200- and 300-step rungs (interpolated 199, paired bootstrap 95\% CI
$[116,358]$; a quarter of resamples never cross) and its behavior never
crosses at any budget --- no rung \emph{lands} on the null in absolute
terms (every episode-bootstrap CI excludes it,
Table~\ref{tab:evladder}), and a calibrated per-episode second attack
sees the same step-over ($z$: $-3.1$ at B200 to $+10.3$ at B300;
Appendix~\ref{app:sim}).

Seed replication then re-scopes the two directions differently.
Rerunning redirect and the full ascent ladder on two further
independently trained $\thetaz$ seeds (identical recipe and budgets;
all three start at AUC $1.000$): masking replicates exactly --- AUC
$1.000$, \texttt{mem/null} $0.39$--$0.41$ on every lineage --- but the
ascent endpoint does not: identical full budgets land at
\texttt{mem/null} $1.83/0.91/1.19$ (rank $0.533/0.913/0.863$), because
the operator's self-referential stopping rule saturates at
lineage-dependent levels. The training draw, not the budget, sets the
evidence endpoint (mechanism, and the mirror-image rank/absolute
disagreement it produces: Appendix~\ref{app:joint}). Banked conduct
agrees (full-plan bank units, all three lineages in one report):
redirect lands BRANCH at the retrain level where it edits on every
lineage (divergence/null $1.05/1.05/1.03$, TAIL untouched, same retain
cost), while ascent's overshoot is again primary-only: B300 moves
BRANCH $128\%$ of the $\thetaz$ gap away from the retrain there,
$63\%$/$27\%$ \emph{toward} it on the other two. Both legs of masking
replicate; neither leg of ascent does.

(ii) The same checkpoints close the loop on hardware: blind-scored under
the paired protocol (q25 schedule, $4\times5$ episodes per condition),
success orders
ceiling 20/20 $>$ redirect 18/20 $>$ FT 9/20 $>$ ascent 7/20 $>$ $\thetaz$
5/20, matching the offline floor table. Paired exact tests against the
ceiling separate every condition ($p \le 10^{-3}$) \emph{except} redirect
($p = 0.50$, Table~\ref{tab:trials}) --- and that redirect, not separable
from the retrain at $n{=}20$ (failure to detect, not equivalence), is the
checkpoint whose attack AUC is 1.000. ($\thetaz$'s valid 20: the four complete consecutive blocks after
a mid-collection rig fault and 29-minute gap, fixed before scoring.)

\emph{Why the axes come apart} is measured, not asserted. The request's
behavioral payload is \emph{local} (the BRANCH window, $8.6\%$ of member
frames); the membership statistic is \emph{global}, a whole-episode mean.
Scoring every frame by region (Appendix~\ref{app:joint}): redirect reaches
the retrain null exactly where it edits (BRANCH \texttt{mem/null} $1.07$
vs.\ $\thetaz$'s $0.26$) and leaves tail memorization untouched ($0.20$
vs.\ $0.17$) --- the repair is real, local, and averaged away, hence AUC
$1.000$. Ascent (the overshooting lineage), episode-global by
objective, overshoots \emph{every} region ($1.27$--$2.31$); FT moves
none ($\le 0.32$ everywhere). In
parameter space, the same split: gradient cosine $0.30$ at the repaired
R200 vs.\ $0.66$ at $\thetaz$ --- descending one objective barely moves
the other.

\subsection{Task success does not certify deletion}
\label{sec:hw}
\begin{table}[t]
\centering
\caption{\textbf{$\pi_{0.5}$ hardware, all five conditions}, $n{=}20$ each
under the paired protocol. Release angle orders with success; $\thetaz$'s
tight spread is a consistent displaced aim, not noise. Blind-scored labels
(Table~\ref{tab:trials}); continuous channels from recorded joint
trajectories.}
\label{tab:pi05robot}
\scriptsize
\setlength{\tabcolsep}{3.5pt}
\begin{tabular}{@{}lcccc@{}}
\toprule
 & succ. & release ($^\circ$) & grasp ($^\circ$) & carry (s) \\
\midrule
ceiling       & 19/20 & $15.1\pm1.6$ & $-20.6\pm0.8$ & 2.95 \\
redirect (H)  & 17/20 & $14.3\pm1.5$ & $-16.6\pm5.8$ & 2.43 \\
FT (D)        & 18/20 & $13.8\pm2.3$ & $-15.0\pm8.8$ & 2.87 \\
ascent (F)    &  8/20 & $13.3\pm1.8$ & $-15.5\pm7.0$ & 2.33 \\
$\thetaz$ (B) &  2/20 & $11.3\pm1.5$ & $-17.2\pm5.9$ & 1.92 \\
\bottomrule
\end{tabular}
\end{table}
All five $\pi_{0.5}$ conditions ran $n{=}20$ under the paired protocol
(Table~\ref{tab:pi05robot}). Outcomes are blind-scored from
anonymized, shuffled clips; across both arms the live keys agreed with the
blind labels on 199 of 200 episodes (the flip: an ACT ascent episode; blind
authoritative).
Three observations (Fig.~\ref{fig:robot}; blind-scored taxonomy in
Table~\ref{tab:trials}). Release aim is the harm mechanism: success
orders with release angle, and $\thetaz$'s tight $11.3^\circ\!\pm\!1.5$
is a \emph{consistent} displaced aim, not noise.
The 130-lineage grasp dispersion survives every operator ($\pm5.8$--$8.8$
vs.\ ceiling's $\pm0.8$): repair restored the release aim, not the grasp
distribution, and success tracks only the former. Training lineage also
leaks through \emph{conduct}: release angle alone distinguishes $\thetaz$
from ceiling rollouts at AUC $0.958$ (a conduct signal, not membership
inference), and operators suppress it unevenly (ascent $0.775$, FT $0.653$,
redirect $0.634$): redirect is \emph{most} deniable in conduct yet
\emph{least} deniable to the loss attack. FT is the section title's
sharpest instance: it recovers 16 of the 17-point success gap while moving
offline counterfactual conduct $4.6\%$ and rank evidence not at all ---
success is a manifestation channel, not the behavior axis, and the two
dissociate inside a single control.
The blind taxonomy makes the mode's signature explicit: sway-and-knock
accounts for 17 of $\thetaz$'s 18 failures, the terminal-swing rate falls
across the ladder ($\thetaz$ 90\%, ascent 60\%, FT and redirect 0--5\%; on
ACT: 75/50/40/0\%), and redirect's residual
failures carry a different syntax entirely (dropped-short, timeout; zero
sway on either arm) --- the edit removed the contaminated mode, not merely
some failures. Crucially, expression is initial-state-dependent at fixed
weights: the harm is absent exactly at $\thetaz$'s two successful cup
positions --- why closed-loop testing at feasible trial counts cannot
\emph{alone} certify absence of the harmful mode. A pre-registered probe of
whether the replanning schedule toggles expression was refuted (opposite
sign, n.s.\ at $n{=}10$); cross-session rates are not quantifiable in this
campaign (the preceding session was voided by a keying artifact), so we
report state-dependence within the session of record. And plainly:
collection was sequential per model on both arms
(Sec.~\ref{sec:robotproto}); paired positions control initial state, not
session drift, so every hardware contrast here is sequential-collection
evidence. The confirmatory design is no longer unspecified: an interleaved
multi-session protocol (retrain / $\thetaz$ / redirect / ascent, order
randomized within session, prespecified noninferiority margins, power-set
trial counts, two blind raters) is frozen in the released artifacts; until
it runs, the sequential design bounds what this arm can claim.

\begin{widefigure}[t]
\centering
\includegraphics[width=0.55\linewidth]{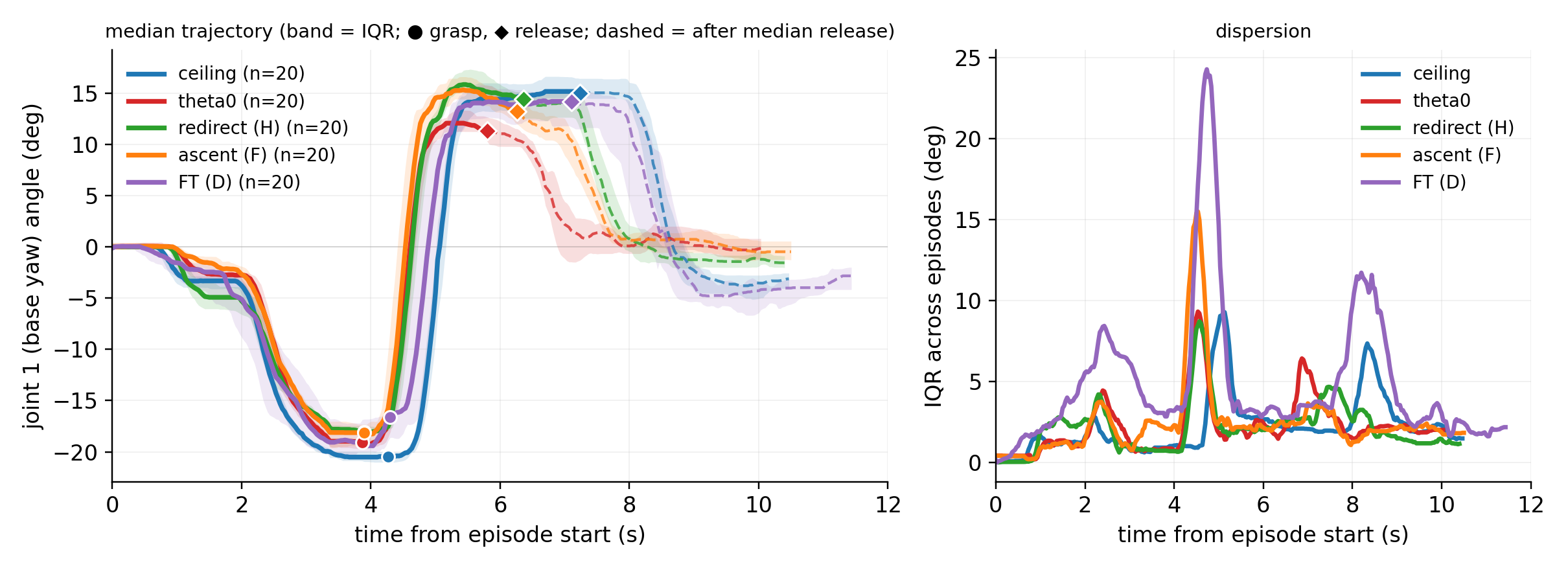}\\[2pt]
\includegraphics[width=0.63\linewidth]{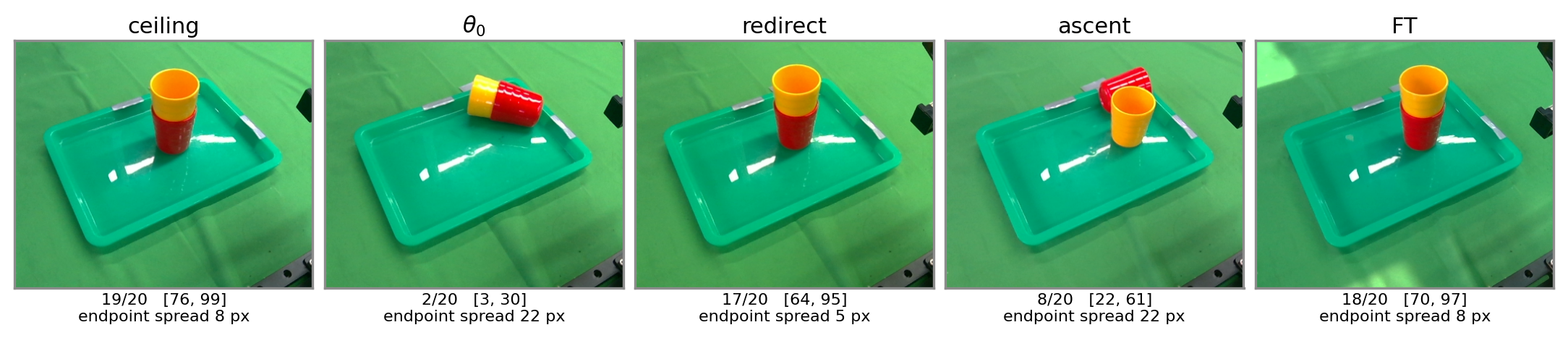}
\caption{\textbf{One mechanism, measured and averaged ($\pi_{0.5}$).}
\emph{Top:} median base-yaw trajectory, band $=$ IQR, $n{=}20$;
$\bullet$ median grasp, $\blacklozenge$ median release, dashed after it.
Release markers order with success --- ceiling $15.1^\circ$ (19/20) down to
$\thetaz$ $11.3^\circ$ (2/20), which also releases earliest; the grasp-phase
dispersion is the inherited 130-lineage spread, unrepaired by every operator.
\emph{Bottom:} each condition's \emph{medoid} end-state --- the real
terminal frame nearest that condition's median endpoint, one fixed rule,
nothing hand-picked (pixelwise averages: appendix); endpoint spread: tight
where a modal outcome exists (redirect 5\,px, ceiling/FT 8\,px), $3\times$
looser where none does ($\thetaz$, ascent 22\,px). The failure modes
differ: $\thetaz$ topples the assembly; ascent sets the cup \emph{beside}
the target. Blind-scored; Wilson CIs and taxonomy: Table~\ref{tab:trials}.}
\label{fig:robot}
\end{widefigure}

\subsection{The disagreement generalizes across policy classes}
\label{sec:generalize}
Prior cells replicate, and the $\pi_{0.5}$ arm is now measured on the same
per-demonstration instrument as ACT (Appendix~\ref{app:conv}; 3-seed
LoRA re-fit null $0.558 \pm 0.012$, calib/audit exchangeability $0.550$):
the robot checkpoints for redirect, FT, and ascent \emph{all} read rank AUC
$1.000$, CI $[1.00,1.00]$ --- identical to $\thetaz$ --- while the
within-model absolute level closes only a third of the $\thetaz\!\to$null
gap (\texttt{mem/null} $0.10 \to 0.34/0.34/0.44$). The redirect restoring
17/20 on hardware is again indistinguishable from the un-edited policy to the
rank audit (an earlier, weaker flow-space attack on the offline pilot
checkpoints told the same masking story); the arm's remaining scope limit
is behavioral, not evidential --- one task family at $n{=}20$. The DP cell
is measured
and replicates the overshoot exhibit in a third class: null $0.534$,
$\thetaz$ $0.780$, and delivered ascent at 500 steps reads rank AUC $0.560$
--- \emph{at} the null --- while its absolute member level runs
\texttt{mem/null} $=1.52$ past it.

The gate is
recipe-level (Table~\ref{tab:pushtgate};
Fig.~\ref{fig:pusht}): gap 14.8 points, seed-level Welch $t{=}3.78$,
$p\approx0.018$, every $\thetaz$ seed below every ceiling seed, and
\texttt{avg\_max\_reward} separates too (0.805 vs.\ 0.947) --- the mirrored
mode pulls toward the reflected target, not generic incompetence (seeds
four and five nearly doubled the 3-seed SD --- the fleet is the
instrument).

A control that must move nothing exposes a learning-rate cliff: a
$10\times$ lr change moved budget-matched FT's retain damage $175\times$,
voiding an entire first ascent sweep as an lr artifact
(Appendix~\ref{app:lrcliff}). All results below use the stable rate; the
pre-declared freeze rule selects ratio $0.20$, delivery stays
$0.79$--$0.83$, and the forget--retain gradient cosine is negative at every
dose --- ascent \emph{opposes} retain where redirect was orthogonal to it,
replicating DP.

The evidence axis rests on synthesized non-members (mirrors of clean
episodes disjoint from the forget sources and the operator's calibration
set) --- the arm's main internal risk, which survives its own check: five
retrain seeds give a null of $0.534 \pm 0.012$, exchangeable to a model
that saw neither.
Against it the three $\thetaz$ seeds read $0.981$--$0.997$, CIs disjoint from
every ceiling CI, at \texttt{mem/null} $0.088$--$0.109$: $\thetaz$ fits the
contaminated episodes ${\sim}10\times$ better than a model that never saw
them. Absolute levels are interval-honest (episode-bootstrap CI
half-width $\le 0.01$, side-stable against every retrain seed); a
calibrated two-sided attack corroborates (Appendix~\ref{app:sim}).

The operator results then reproduce, \emph{prospectively}, the rank-blindness
the ACT arm found retrospectively --- on all three $\thetaz$ seeds. At the
frozen dose FT is inert on both axes (rank $-0.001$ to $-0.003$ per seed;
behavior $-2.1$ points pooled, $t{=}{-}0.95$) --- P4 confirmed, matching
ACT's $+1.6\%$, $\pi_{0.5}$'s $4.6\%$ and DP's $0.19\%$ --- while ascent moves
the forget loss $+40$--$78\%$ per seed and buys $\le 0.003$ of rank AUC,
$4$--$6\%$ of the absolute evidence gap, and $+1.6$ points of behavior pooled
($t=1.39$, inside the null; Fig.~\ref{fig:pusht}).
P3 is refuted at the frozen dose: a large loss-space movement is
neither evidence erasure nor behavioral change.

At higher dose, ``repaired but not forgotten'' replicates across
seeds: auditing the over-budget rung the freeze rule
excluded (it breaks retain) recovers the central shape on every
$\thetaz$ seed --- at ratio
$0.50$ success moves $+8.0/{+}11.6/{+}2.4$ points (all three seeds toward the
fleet; Stouffer $p\approx0.004$), bought with retain damage of
$+42.7/{+}33.8/{+}68.7\%$ --- the guard stopped all three runs at
150/200 steps: not budget-matched, not deployable.
On the dose-resolved seed (s1000), that $39.5\%$-of-gap behavioral
recovery moves the rank audit $3.8\%$ and the absolute level $18.1\%$,
and the audit flags the rung
on \emph{all three} seeds (AUC $0.964/0.981/0.985$, every CI above the
null's highest seed $0.547$; \texttt{mem/null} $0.21$--$0.27$).
Retain was \emph{damaged} and success rose anyway --- repair by
asymmetric damage, not recovery --- and exploratory throughout
(Sec.~\ref{sec:design}); texture, not load-bearing.
The defensible claim: \emph{behavioral repair is purchasable with
retain damage; no tested dose brings either evidence statistic near the
retrain reference.}

\begin{widefigure}[t]
\centering
\includegraphics[width=0.44\linewidth]{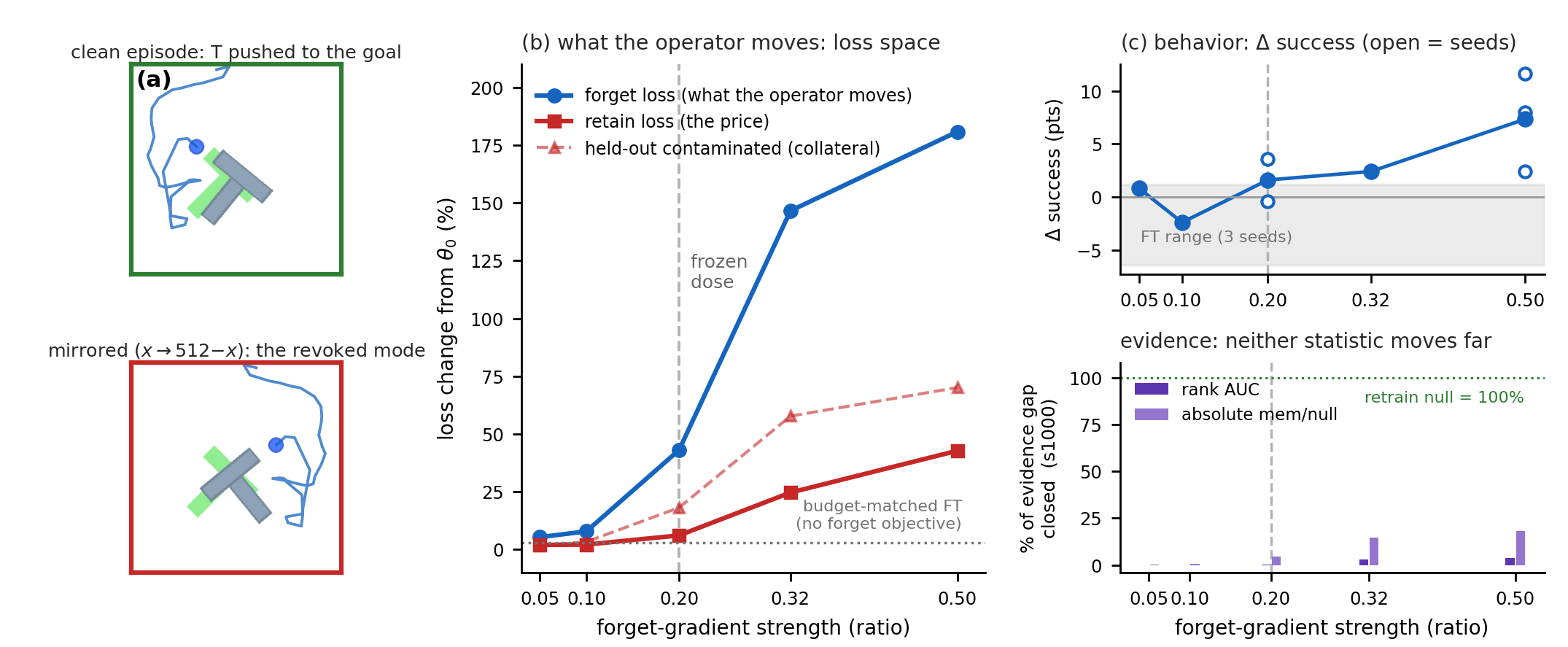}
\caption{\textbf{The PushT arm: the deletion request, what the operator
moves, and what the audits see.} \emph{(a)} gym-pusht at real dataset states: a clean episode ends with
the T on the green goal; its mirror ($x\to512{-}x$) pursues the reflected
pose --- a coherent wrong-goal mode. Blue: agent path. \emph{(b)} Ascent
moves what it optimizes (forget $+180.9\%$) at a climbing price (retain
$+42.7\%$, collateral $+70.1\%$; s1000); dashed: the frozen dose. \emph{(c)} Top: success change
per seed (r050: over-budget, exploratory); frozen-dose movement sits
inside FT's own seed range. Bottom: neither evidence
statistic approaches the null at any dose (rank $\le 3.8\%$, absolute
$\le 18.1\%$).}
\label{fig:pusht}
\end{widefigure}

\subsection{Applicability depends on the deletion request's structure}
\label{sec:applic}
Localized redirect fails a pre-specified feasibility gate --- a
boundary of the \emph{request}, not of the operator (bar fixed before the
intersection was computed, by dated amendment). Branch-redirect needs recorded \emph{actions} off the clean manifold at
\emph{states} still on it; whole-episode contamination has no such region
--- so under a frozen diagnostic (Appendix~\ref{app:feas},
Fig.~\ref{fig:feasibility}) branch windows cover 15.7\% of editable
frames but the support-gated intersection is 3.65\% --- under the 5\%
bar; \emph{tightening} the detector worsens it.
P2 is therefore prospectively not-applicable (an all-frames variant
would be global mode rewriting, outside scope) --- branch-redirect
requires contamination sharing a prefix with clean support, and not
every request supplies one.

\emph{Baselines.} A 14-method robomimic-BC comparison
(Appendix~\ref{app:robomimic}): 13 of 14 degrade closed-loop success
$68$--$101\%$ regardless of offline forget quality --- EU-$k$ pairs
near-perfect evidence ($0.008$) with a dead policy.
\section{Conclusion}
\label{sec:boundaries}
This paper turns ``unlearn my demonstrations'' into two measurable
guarantees: behavior against a retrain \emph{floor},
evidence against a retrain \emph{null} in rank and absolute form. They
come apart in both directions: hardware editing restores near-retrain
success at unchanged evidence; ascent's
endpoint is a training-draw lottery (null to $87\%$ past at identical
budgets). A 19-replica conformal audit rejects every edit at
$p{=}0.05$: deletion is a conjunction no one-axis audit certifies ---
yet the pair is constructive: applicability measurable, repair
purchasable at priced retain cost, an auditor holding both fooled by
neither masking nor overshoot.
Limits: repair is on-distribution only (deep-state attractors
survive; \citealp{ross2011dagger}); no offline metric predicted
closed-loop manifestation; composition fails; scope: mode-level
contamination of a re-runnable set (Appendix~\ref{app:gaps}).

\subsubsection*{Ethics statement}
All demonstrations were collected by the authors on their own hardware; no
third-party subjects, personal data, or personally identifying information
appear in any dataset, frame, or released artifact. The revocation scenario
is constructed: this work evaluates \emph{technical deletion proxies} ---
retrain-calibrated behavioral and membership audits --- and passing them
establishes neither legal compliance with a data-protection regime nor
causal removal of a demonstration's influence. We distinguish throughout
between regulatory deletion, causal removal, empirical retrain consistency,
behavioral repair, and attack-specific deniability; the paper's claims
concern only the last three, under the tested loss-based auditor family.

\subsubsection*{Reproducibility statement}
The released artifacts contain the dataset episode identities and every
train/retain/forget/calibration/audit split; the scripts that regenerate
every table and figure from measured JSON artifacts (no number in any table
is typed by hand); per-demonstration audit scores; the pre-registration and
amendment texts; the blind-scoring package (200 anonymized clips' scoring
sheet, rubric, and the sealed clip-to-condition mapping, opened only after
scoring); per-trial hardware outcome logs; and the configurations, seeds,
budgets, and stopping rules of every operator. Invalidated runs (the
unstable-lr sweep) are released alongside valid ones with their status
recorded. A manifest checker ships with the archive and verifies, item by
item, that a built archive contains everything this statement lists.

\bibliographystyle{iclr2027_conference}
\bibliography{refs}

\begin{thebibliography}{41}
\providecommand{\natexlab}[1]{#1}
\providecommand{\url}[1]{\texttt{#1}}
\expandafter\ifx\csname urlstyle\endcsname\relax
  \providecommand{\doi}[1]{doi: #1}\else
  \providecommand{\doi}{doi: \begingroup \urlstyle{rm}\Url}\fi

\bibitem[Agia et~al.(2025)Agia, Sinha, Yang, Antonova, Pavone, Nishimura,
  Itkina, and Bohg]{cupid2025}
Christopher Agia, Rohan Sinha, Jingyun Yang, Rika Antonova, Marco Pavone,
  Haruki Nishimura, Masha Itkina, and Jeannette Bohg.
\newblock {CUPID}: Curating data your robot loves with influence functions.
\newblock In \emph{Conference on Robot Learning (CoRL)}, volume 305 of
  \emph{PMLR}, pp.\  2907--2932, 2025.
\newblock URL \url{https://proceedings.mlr.press/v305/agia25a.html}.

\bibitem[Black et~al.(2025)Black, Brown, Driess, Esmail, Equi, Finn, Fusai,
  Groom, Hausman, Ichter, Jakubczak, Jones, Ke, Levine, Li-Bell, Mothukuri,
  Nair, Pertsch, Shi, Smith, Tanner, Vuong, Walling, Wang, and
  Zhilinsky]{black2024pi0}
Kevin Black, Noah Brown, Danny Driess, Adnan Esmail, Michael~Robert Equi,
  Chelsea Finn, Niccolo Fusai, Lachy Groom, Karol Hausman, Brian Ichter, Szymon
  Jakubczak, Tim Jones, Liyiming Ke, Sergey Levine, Adrian Li-Bell, Mohith
  Mothukuri, Suraj Nair, Karl Pertsch, Lucy~Xiaoyang Shi, Laura Smith, James
  Tanner, Quan Vuong, Anna Walling, Haohuan Wang, and Ury Zhilinsky.
\newblock {$\pi_0$: A Vision-Language-Action Flow Model for General Robot
  Control}.
\newblock In \emph{Robotics: Science and Systems (RSS)}, 2025.
\newblock \doi{10.15607/RSS.2025.XXI.010}.
\newblock arXiv:2410.24164.

\bibitem[Bourtoule et~al.(2021)Bourtoule, Chandrasekaran, Choquette-Choo, Jia,
  Travers, Zhang, Lie, and Papernot]{bourtoule2021sisa}
Lucas Bourtoule, Varun Chandrasekaran, Christopher~A. Choquette-Choo, Hengrui
  Jia, Adelin Travers, Baiwu Zhang, David Lie, and Nicolas Papernot.
\newblock Machine unlearning.
\newblock In \emph{IEEE Symposium on Security and Privacy (S\&P)}, 2021.

\bibitem[Cadene et~al.(2024)Cadene, Alibert, Soare, Gallouedec, Zouitine,
  Palma, Kooijmans, Aractingi, Shukor, Aubakirova, Russi, Capuano, Pascal,
  Choghari, Meftah, Ellerbach, Moss, and Wolf]{cadene2024lerobot}
Remi Cadene, Simon Alibert, Alexander Soare, Quentin Gallouedec, Adil Zouitine,
  Steven Palma, Pepijn Kooijmans, Michel Aractingi, Mustafa Shukor, Dana
  Aubakirova, Martino Russi, Francesco Capuano, Caroline Pascal, Jade Choghari,
  Khalil Meftah, Maxime Ellerbach, Jess Moss, and Thomas Wolf.
\newblock {LeRobot}: State-of-the-art machine learning for real-world robotics
  in {Pytorch}.
\newblock \url{https://github.com/huggingface/lerobot}, 2024.

\bibitem[Carlini et~al.(2022)Carlini, Chien, Nasr, Song, Terzis, and
  Tram{\`e}r]{carlini2022lira}
Nicholas Carlini, Steve Chien, Milad Nasr, Shuang Song, Andreas Terzis, and
  Florian Tram{\`e}r.
\newblock Membership inference attacks from first principles.
\newblock In \emph{IEEE Symposium on Security and Privacy (S\&P)}, 2022.

\bibitem[Chi et~al.(2023)Chi, Feng, Du, Xu, Cousineau, Burchfiel, and
  Song]{chi2023diffusion}
Cheng Chi, Siyuan Feng, Yilun Du, Zhenjia Xu, Eric Cousineau, Benjamin
  Burchfiel, and Shuran Song.
\newblock Diffusion policy: Visuomotor policy learning via action diffusion.
\newblock In \emph{Robotics: Science and Systems (RSS)}, 2023.

\bibitem[Chi et~al.(2024)Chi, Xu, Pan, Cousineau, Burchfiel, Feng, Tedrake, and
  Song]{chi2024umi}
Cheng Chi, Zhenjia Xu, Chuer Pan, Eric Cousineau, Benjamin Burchfiel, Siyuan
  Feng, Russ Tedrake, and Shuran Song.
\newblock Universal manipulation interface: In-the-wild robot teaching without
  in-the-wild robots.
\newblock In \emph{Robotics: Science and Systems (RSS)}, 2024.

\bibitem[Dass et~al.(2026)Dass, Khaddaj, Engstrom, Madry, Ilyas, and
  Mart{\'i}n-Mart{\'i}n]{datamil2025}
Shivin Dass, Alaa Khaddaj, Logan Engstrom, Aleksander Madry, Andrew Ilyas, and
  Roberto Mart{\'i}n-Mart{\'i}n.
\newblock {DataMIL}: Selecting data for robot imitation learning with
  datamodels.
\newblock In \emph{International Conference on Learning Representations
  (ICLR)}, 2026.
\newblock URL \url{https://arxiv.org/abs/2505.09603}.

\bibitem[Eldan \& Russinovich(2023)Eldan and Russinovich]{eldan2023whos}
Ronen Eldan and Mark Russinovich.
\newblock Who's {Harry Potter}? approximate unlearning in {LLMs}.
\newblock \emph{arXiv preprint arXiv:2310.02238}, 2023.

\bibitem[Fan et~al.(2024)Fan, Liu, Zhang, Wong, Wei, and Liu]{fan2024salun}
Chongyu Fan, Jiancheng Liu, Yihua Zhang, Eric Wong, Dennis Wei, and Sijia Liu.
\newblock {SalUn}: Empowering machine unlearning via gradient-based weight
  saliency in both image classification and generation.
\newblock In \emph{ICLR}, 2024.

\bibitem[Ginart et~al.(2019)Ginart, Guan, Valiant, and Zou]{ginart2019making}
Antonio Ginart, Melody Guan, Gregory Valiant, and James Zou.
\newblock Making {AI} forget you: Data deletion in machine learning.
\newblock In \emph{NeurIPS}, 2019.

\bibitem[Goel et~al.(2022)Goel, Prabhu, Sanyal, Lim, Torr, and
  Kumaraguru]{goel2022evaluating}
Shashwat Goel, Ameya Prabhu, Amartya Sanyal, Ser-Nam Lim, Philip Torr, and
  Ponnurangam Kumaraguru.
\newblock Towards adversarial evaluations for inexact machine unlearning.
\newblock \emph{arXiv preprint arXiv:2201.06640}, 2022.
\newblock URL \url{https://arxiv.org/abs/2201.06640}.

\bibitem[Golatkar et~al.(2020)Golatkar, Achille, and
  Soatto]{golatkar2020eternal}
Aditya Golatkar, Alessandro Achille, and Stefano Soatto.
\newblock Eternal sunshine of the spotless net: Selective forgetting in deep
  networks.
\newblock In \emph{CVPR}, 2020.

\bibitem[Gong et~al.(2025)Gong, Li, Yao, and Wang]{trajdeleter2024}
Chen Gong, Kecen Li, Jin Yao, and Tianhao Wang.
\newblock {TrajDeleter}: Enabling trajectory forgetting in offline
  reinforcement learning agents.
\newblock In \emph{Network and Distributed System Security Symposium (NDSS)},
  2025.
\newblock URL \url{https://arxiv.org/abs/2404.12530}.

\bibitem[Guo et~al.(2020)Guo, Goldstein, Hannun, and van~der
  Maaten]{guo2020certified}
Chuan Guo, Tom Goldstein, Awni Hannun, and Laurens van~der Maaten.
\newblock Certified data removal from machine learning models.
\newblock In \emph{International Conference on Machine Learning (ICML)}, 2020.

\bibitem[Hu et~al.(2021)Hu, Shen, Wallis, Allen-Zhu, Li, Wang, Wang, and
  Chen]{hu2021lora}
Edward~J. Hu, Yelong Shen, Phillip Wallis, Zeyuan Allen-Zhu, Yuanzhi Li, Shean
  Wang, Lu~Wang, and Weizhu Chen.
\newblock {LoRA}: Low-rank adaptation of large language models, 2021.
\newblock arXiv:2106.09685.

\bibitem[Kelly et~al.(2026)Kelly, Zhang, Zhang, Lin, and Li]{rff2025}
Manuel Kelly, Xin Zhang, Yingxue Zhang, Fangzhou Lin, and Yanhua Li.
\newblock Unlearning diffusion policies with relative fisher forgetting.
\newblock OpenReview preprint, 2026.
\newblock URL \url{https://openreview.net/forum?id=TidLO0qdp0}.
\newblock Non-archival; submitted to ICLR 2026.

\bibitem[Kim et~al.(2024)Kim, Pertsch, Karamcheti, Xiao, Balakrishna, Nair,
  Rafailov, Foster, Lam, Sanketi, Vuong, Kollar, Burchfiel, Tedrake, Sadigh,
  Levine, Liang, and Finn]{kim2024openvla}
Moo~Jin Kim, Karl Pertsch, Siddharth Karamcheti, Ted Xiao, Ashwin Balakrishna,
  Suraj Nair, Rafael Rafailov, Ethan Foster, Grace Lam, Pannag Sanketi, Quan
  Vuong, Thomas Kollar, Benjamin Burchfiel, Russ Tedrake, Dorsa Sadigh, Sergey
  Levine, Percy Liang, and Chelsea Finn.
\newblock {OpenVLA}: An open-source vision-language-action model.
\newblock In \emph{Conference on Robot Learning (CoRL)}, 2024.
\newblock URL \url{https://arxiv.org/abs/2406.09246}.

\bibitem[Koh \& Liang(2017)Koh and Liang]{koh2017understanding}
Pang~Wei Koh and Percy Liang.
\newblock Understanding black-box predictions via influence functions.
\newblock In \emph{ICML}, 2017.

\bibitem[Kurmanji et~al.(2023)Kurmanji, Triantafillou, Hayes, and
  Triantafillou]{kurmanji2023scrub}
Meghdad Kurmanji, Peter Triantafillou, Jamie Hayes, and Eleni Triantafillou.
\newblock Towards unbounded machine unlearning.
\newblock In \emph{NeurIPS}, 2023.

\bibitem[Li et~al.(2024)Li, Pan, Gopal, Yue, Berrios, Gatti, Li, Dombrowski,
  Goel, Mukobi, Helm-Burger, Lababidi, Justen, Liu, Chen, Barrass, Zhang, Zhu,
  Tamirisa, Bharathi, Herbert-Voss, Breuer, Zou, Mazeika, Wang, Oswal, Lin,
  Hunt, Tienken-Harder, Shih, Talley, Guan, Steneker, Campbell, Jokubaitis,
  Basart, Fitz, Kumaraguru, Karmakar, Tupakula, Varadharajan, Shoshitaishvili,
  Ba, Esvelt, Wang, and Hendrycks]{li2024wmdp}
Nathaniel Li, Alexander Pan, Anjali Gopal, Summer Yue, Daniel Berrios, Alice
  Gatti, Justin~D. Li, Ann-Kathrin Dombrowski, Shashwat Goel, Gabriel Mukobi,
  Nathan Helm-Burger, Rassin Lababidi, Lennart Justen, Andrew~Bo Liu, Michael
  Chen, Isabelle Barrass, Oliver Zhang, Xiaoyuan Zhu, Rishub Tamirisa, Bhrugu
  Bharathi, Ariel Herbert-Voss, Cort~B. Breuer, Andy Zou, Mantas Mazeika, Zifan
  Wang, Palash Oswal, Weiran Lin, Adam~Alfred Hunt, Justin Tienken-Harder,
  Kevin~Y. Shih, Kemper Talley, John Guan, Ian Steneker, David Campbell, Brad
  Jokubaitis, Steven Basart, Stephen Fitz, Ponnurangam Kumaraguru,
  Kallol~Krishna Karmakar, Uday Tupakula, Vijay Varadharajan, Yan
  Shoshitaishvili, Jimmy Ba, Kevin~M. Esvelt, Alexandr Wang, and Dan Hendrycks.
\newblock The {WMDP} benchmark: Measuring and reducing malicious use with
  unlearning.
\newblock In \emph{International Conference on Machine Learning (ICML)}, volume
  235 of \emph{PMLR}, pp.\  28525--28550, 2024.
\newblock URL \url{https://proceedings.mlr.press/v235/li24bc.html}.

\bibitem[Maini et~al.(2024)Maini, Feng, Schwarzschild, Lipton, and
  Kolter]{maini2024tofu}
Pratyush Maini, Zhili Feng, Avi Schwarzschild, Zachary~C. Lipton, and J.~Zico
  Kolter.
\newblock {TOFU}: A task of fictitious unlearning for {LLMs}, 2024.
\newblock arXiv:2401.06121.

\bibitem[Mandlekar et~al.(2021)Mandlekar, Xu, Wong, Nasiriany, Wang, Kulkarni,
  Fei-Fei, Savarese, Zhu, and Mart{\'i}n-Mart{\'i}n]{mandlekar2021robomimic}
Ajay Mandlekar, Danfei Xu, Josiah Wong, Soroush Nasiriany, Chen Wang, Rohun
  Kulkarni, Li~Fei-Fei, Silvio Savarese, Yuke Zhu, and Roberto
  Mart{\'i}n-Mart{\'i}n.
\newblock What matters in learning from offline human demonstrations for robot
  manipulation.
\newblock In \emph{Conference on Robot Learning (CoRL)}, 2021.

\bibitem[{Physical Intelligence} et~al.(2025){Physical Intelligence}, Black,
  Brown, Darpinian, Dhabalia, Driess, Esmail, Equi, Finn, Fusai, Galliker,
  Ghosh, Groom, Hausman, Ichter, Jakubczak, Jones, Ke, LeBlanc, Levine,
  Li-Bell, Mothukuri, Nair, Pertsch, Ren, Shi, Smith, Springenberg, Stachowicz,
  Tanner, Vuong, Walke, Walling, Wang, Yu, and Zhilinsky]{intelligence2025pi05}
{Physical Intelligence}, Kevin Black, Noah Brown, James Darpinian, Karan
  Dhabalia, Danny Driess, Adnan Esmail, Michael Equi, Chelsea Finn, Niccolo
  Fusai, Manuel~Y. Galliker, Dibya Ghosh, Lachy Groom, Karol Hausman, Brian
  Ichter, Szymon Jakubczak, Tim Jones, Liyiming Ke, Devin LeBlanc, Sergey
  Levine, Adrian Li-Bell, Mohith Mothukuri, Suraj Nair, Karl Pertsch, Allen~Z.
  Ren, Lucy~Xiaoyang Shi, Laura Smith, Jost~Tobias Springenberg, Kyle
  Stachowicz, James Tanner, Quan Vuong, Homer Walke, Anna Walling, Haohuan
  Wang, Lili Yu, and Ury Zhilinsky.
\newblock {$\pi_{0.5}$}: a vision-language-action model with open-world
  generalization.
\newblock In \emph{Conference on Robot Learning (CoRL)}, volume 305 of
  \emph{PMLR}, pp.\  17--40, 2025.
\newblock URL \url{https://arxiv.org/abs/2504.16054}.

\bibitem[Pomerleau(1989)]{pomerleau1989alvinn}
Dean~A. Pomerleau.
\newblock {ALVINN}: An autonomous land vehicle in a neural network.
\newblock In \emph{NeurIPS}, 1989.

\bibitem[Ranjan \& Polyzou(2026)Ranjan and Polyzou]{vlaforget2026}
Ravi Ranjan and Agoritsa Polyzou.
\newblock {VLA-Forget}: Vision-language-action unlearning for embodied
  foundation models.
\newblock In \emph{Proceedings of the 4th Workshop on Towards Knowledgeable
  Foundation Models (KnowFM)}, pp.\  60--77. Association for Computational
  Linguistics, 2026.
\newblock URL \url{https://aclanthology.org/2026.knowfm-1.5/}.
\newblock arXiv:2604.03956v2.

\bibitem[Ross et~al.(2011)Ross, Gordon, and Bagnell]{ross2011dagger}
St{\'e}phane Ross, Geoffrey Gordon, and Drew Bagnell.
\newblock A reduction of imitation learning and structured prediction to
  no-regret online learning.
\newblock In \emph{AISTATS}, 2011.

\bibitem[Salem et~al.(2019)Salem, Zhang, Humbert, Berrang, Fritz, and
  Backes]{salem2019mlleaks}
Ahmed Salem, Yang Zhang, Mathias Humbert, Pascal Berrang, Mario Fritz, and
  Michael Backes.
\newblock {ML-Leaks}: Model and data independent membership inference attacks
  and defenses on machine learning models.
\newblock In \emph{NDSS}, 2019.

\bibitem[Sekhari et~al.(2021)Sekhari, Acharya, Kamath, and
  Suresh]{sekhari2021remember}
Ayush Sekhari, Jayadev Acharya, Gautam Kamath, and Ananda~Theertha Suresh.
\newblock Remember what you want to forget: Algorithms for machine unlearning.
\newblock In \emph{NeurIPS}, 2021.

\bibitem[Shokri et~al.(2017)Shokri, Stronati, Song, and
  Shmatikov]{shokri2017membership}
Reza Shokri, Marco Stronati, Congzheng Song, and Vitaly Shmatikov.
\newblock Membership inference attacks against machine learning models.
\newblock In \emph{IEEE Symposium on Security and Privacy (S\&P)}, 2017.

\bibitem[Snyder et~al.(2025)Snyder, Hancock, Badithela, Dixon, Miller, Ambrus,
  Majumdar, Itkina, and Nishimura]{step2025stopping}
David Snyder, Asher~James Hancock, Apurva Badithela, Emma Dixon, Patrick
  Miller, Rares~Andrei Ambrus, Anirudha Majumdar, Masha Itkina, and Haruki
  Nishimura.
\newblock Is your imitation learning policy better than mine? policy comparison
  with near-optimal stopping.
\newblock In \emph{Robotics: Science and Systems (RSS)}, 2025.
\newblock URL \url{https://arxiv.org/abs/2503.10966}.

\bibitem[Tarun et~al.(2023)Tarun, Chundawat, Mandal, and
  Kankanhalli]{tarun2023deep}
Ayush~Kumar Tarun, Vikram~Singh Chundawat, Murari Mandal, and Mohan
  Kankanhalli.
\newblock Deep regression unlearning.
\newblock In \emph{ICML}, 2023.

\bibitem[{TRI LBM Team}(2025)]{tri2025lbm}
{TRI LBM Team}.
\newblock A careful examination of large behavior models for multitask
  dexterous manipulation.
\newblock \emph{arXiv preprint arXiv:2507.05331}, 2025.
\newblock URL \url{https://arxiv.org/abs/2507.05331}.

\bibitem[Wang et~al.(2026)Wang, Lei, Li, Liu, Zheng, Li, Zhang, and
  Fan]{more2026}
Hao Wang, Jiuzhou Lei, Dayou Li, Bangya Liu, Minghui Zheng, Manling Li, Ruohan
  Zhang, and Zhiwen Fan.
\newblock Behavior uncloning: Distilling mode redirection into policy weights
  without inference-time steering.
\newblock \emph{arXiv preprint arXiv:2606.29201}, 2026.
\newblock URL \url{https://arxiv.org/abs/2606.29201}.

\bibitem[Yan et~al.(2026)Yan, Li, Zhu, Wang, Shou, Miao, Pang, Hong, and
  Guo]{redflow2026}
Zhengyang Yan, Junhao Li, Fangqi Zhu, Zijun Wang, Quanxin Shou, Yikun Miao,
  Xiaoyi Pang, Zicong Hong, and Song Guo.
\newblock {RedFlow}: Redirect failure into action-level corrections for
  flow-matching {VLA} policy.
\newblock \emph{arXiv preprint arXiv:2607.27782}, 2026.
\newblock URL \url{https://arxiv.org/abs/2607.27782}.

\bibitem[Yeom et~al.(2018)Yeom, Giacomelli, Fredrikson, and
  Jha]{yeom2018privacy}
Samuel Yeom, Irene Giacomelli, Matt Fredrikson, and Somesh Jha.
\newblock Privacy risk in machine learning: Analyzing the connection to
  overfitting.
\newblock In \emph{IEEE Computer Security Foundations Symposium (CSF)}, 2018.

\bibitem[Zhang(2024)]{zhang2024partial}
Jiahao Zhang.
\newblock Graph unlearning with efficient partial retraining.
\newblock In \emph{Companion Proceedings of the ACM Web Conference (WWW), PhD
  Symposium Track}, pp.\  1218--1221, 2024.
\newblock \doi{10.1145/3589335.3651265}.
\newblock arXiv:2403.07353.

\bibitem[Zhang et~al.(2026{\natexlab{a}})Zhang, Wang, and
  Wang]{zhang2026attackby}
Jiahao Zhang, Yilong Wang, and Suhang Wang.
\newblock Attack by unlearning: Unlearning-induced adversarial attacks on graph
  neural networks, 2026{\natexlab{a}}.
\newblock URL \url{https://arxiv.org/abs/2603.18570}.
\newblock arXiv:2603.18570.

\bibitem[Zhang et~al.(2026{\natexlab{b}})Zhang, Wang, Zhang, Liu, and
  Wang]{zhang2026inversion}
Jiahao Zhang, Yilong Wang, Zhiwei Zhang, Xiaorui Liu, and Suhang Wang.
\newblock Unlearning inversion attacks for graph neural networks.
\newblock In \emph{ACM International Conference on Web Search and Data Mining
  (WSDM)}, pp.\  934--945, 2026{\natexlab{b}}.
\newblock \doi{10.1145/3773966.3777929}.
\newblock arXiv:2506.00808.

\bibitem[Zhang et~al.(2024)Zhang, Lin, Bai, and Mei]{zhang2024npo}
Ruiqi Zhang, Licong Lin, Yu~Bai, and Song Mei.
\newblock Negative preference optimization: From catastrophic collapse to
  effective unlearning, 2024.
\newblock arXiv:2404.05868.

\bibitem[Zhao et~al.(2023)Zhao, Kumar, Levine, and Finn]{zhao2023act}
Tony~Z. Zhao, Vikash Kumar, Sergey Levine, and Chelsea Finn.
\newblock Learning fine-grained bimanual manipulation with low-cost hardware.
\newblock In \emph{Robotics: Science and Systems (RSS)}, 2023.

\end{thebibliography}

\appendix
\section{Measurement conventions}
\label{app:conv}
Every action-space number in the paper carries the signature
$[\text{units},\ \text{reference},\ \text{region},\ \text{horizon},\
\text{samples}]$. The per-arm instantiations:

\begin{table}[ht]
\centering
\scriptsize
\setlength{\tabcolsep}{4pt}
\caption{\textbf{Per-arm measurement conventions.}
$^\dagger$ACT's primary horizon is the executed 25 steps of each 50-step
chunk; the full-plan figures are the late-ramp diagnostic
(Appendix~\ref{app:floors}, Sec.~\ref{sec:offline}). Cross-arm comparisons
use a matched 8-step slice. Units name the axis: action-space units read
counterfactual conduct (ACT, DP, $\pi_{0.5}$); PushT's closed-loop
success reads the manifestation channel only. Membership audits: per-demonstration loss attack,
mean aggregation, 2000-resample bootstrap CIs; members/non-members 30/10 (ACT,
DP, $\pi_{0.5}$; 50-set episodes 0--29 vs.\ 40--49) and 80/80 for PushT
(synthesized mirror non-members, Sec.~\ref{sec:generalize}); nulls pool all
retrain seeds of the arm --- for ACT, 5 seeds $\times$ 3 frame draws
(Sec.~\ref{sec:offline}).}
\label{tab:conventions}
\begin{tabular}{@{}llllll@{}}
\toprule
arm & units & reference & regions & horizon & samples \\
\midrule
ACT & raw joint $L_1$ & 3 retrains (3 pairs) & \scriptsize PRE/BRANCH/POST/TAIL &
25 executed$^\dagger$ & 1200 banked obs \\
DP & raw action $L_1$ & 4 retrains (6 pairs) & \scriptsize PRE/BRANCH/POST/TAIL &
8 (executed) & banked obs \\
$\pi_{0.5}$ & action MAE / flow loss & 3 LoRA re-fits & --- & 50 (executed) &
96 samples/split \\
PushT & closed-loop success & 5-seed fleet & --- & episode & 250 rollouts/cond. \\
\bottomrule
\end{tabular}

\end{table}

\section{Floor tables, both horizon conventions}
\label{app:floors}
Raw-unit divergence to the ceiling per region, under the executed-slice
convention (primary; what the main text reports) and the full 50-step plan
(the late-ramp diagnostic), both recomputed from the same banks by \texttt{act\_slice\_report.py}; the recomputation reproduces
the published full-50 floor and per-model BRANCH cells to $<5\times10^{-3}$
before slicing. A pooled action-space magnitude does not tell you \emph{where}
a policy diverges --- the same summary covers a path difference and a placement
difference --- which is why every number is region-resolved and the temporal
decomposition (Fig.~\ref{fig:manifest}) is reported alongside.

\begin{widetable}[ht]
\centering
\scriptsize
\setlength{\tabcolsep}{2.2pt}
\caption{\textbf{ACT c50 divergence to ceiling}, raw joint units, forget
regions, both horizon conventions. The $\thetaz$ BRANCH gap over the floor
is $2.2$ raw units on the executed slice vs.\ $7.9$ full-plan; operator
ordering is identical under both.}
\label{tab:floors}
\begin{tabular}{@{}l cccc c cccc@{}}
\toprule
 & \multicolumn{4}{c}{executed slice (25 steps, primary)} & & \multicolumn{4}{c}{full plan (50 steps, diagnostic)} \\
\cmidrule(lr){2-5}\cmidrule(lr){7-10}
 & BRANCH & POST & PRE & TAIL & & BRANCH & POST & PRE & TAIL \\
\midrule
floor (3 pairs) & $1.81\pm0.07$ & $2.01\pm0.08$ & $2.83\pm0.14$ & $7.27\pm0.97$ & & $1.83\pm0.06$ & $3.27\pm0.03$ & $3.44\pm0.11$ & $8.93\pm0.64$ \\
\addlinespace[2pt]
$\thetaz$ & 3.99 & 13.25 & 4.48 & 16.51 & & 9.70 & 16.86 & 6.03 & 22.73 \\
R200 & 3.45 & 7.21 & 4.10 & 15.90 & & 3.79 & 9.02 & 5.09 & 22.12 \\
R400 & 3.48 & 6.27 & 3.84 & 16.37 & & 3.80 & 8.06 & 4.72 & 22.76 \\
B50 & 3.96 & 14.09 & 4.54 & 20.58 & & 10.49 & 20.14 & 6.02 & 25.82 \\
B100 & 4.30 & 14.28 & 4.80 & 21.72 & & 11.96 & 20.46 & 6.45 & 26.93 \\
B200 & 5.25 & 15.84 & 5.70 & 23.44 & & 13.83 & 22.21 & 7.60 & 28.60 \\
B300 & 8.01 & 19.26 & 9.90 & 29.12 & & 19.82 & 26.84 & 13.73 & 33.49 \\
B400 & 7.97 & 18.68 & 11.56 & 30.15 & & 19.98 & 26.28 & 16.18 & 34.43 \\
FT200 & 3.92 & 12.23 & 4.31 & 16.12 & & 9.58 & 16.04 & 5.72 & 22.24 \\
\bottomrule
\end{tabular}

\end{widetable}

\section{The PushT learning-rate cliff, the voided grid, and the seed
replication}
\label{app:lrcliff}
The budget-matched FT control exposed the instability: a $10\times$ lr change
moves its retain damage $175\times$, so the entire first ascent sweep at
$10^{-5}$ (guard-stopped at 100/200 steps on every rung) is an lr artifact
and survives only as the exhibit below. All main-text PushT numbers use the
stable rate $2.5\times10^{-6}$.

\begin{table}[ht]
\centering
\footnotesize
\caption{\textbf{The lr cliff (s1000).} Loss changes from $\thetaz$.}
\label{tab:lrcliff}
\begin{tabular*}{\linewidth}{@{\extracolsep{\fill}}llccc@{}}
\toprule
 & & retain & forget & collateral \\
\midrule
\multicolumn{5}{@{}l}{\makebox[0pt][l]{\textbf{FT control vs.\ learning rate} \footnotesize (no forget objective; must move nothing)}}\\[1pt]
\quad lr $10^{-6}$ & steps 200/200 & +0.07\% & +1.62\% & +1.59\% \\
\quad lr $2.5\times10^{-6}$ & steps 200/200 & +2.15\% & +2.77\% & +1.86\% \\
\quad lr $10^{-5}$ & steps 200/200 & +12.29\% & +9.51\% & +3.50\% \\
\addlinespace[2pt]
\multicolumn{5}{@{}l}{\makebox[0pt][l]{\textbf{the voided $10^{-5}$ ascent grid} \footnotesize (guard-stopped at 100/200 steps; lr artifact, not an operator property)}}\\[1pt]
\quad ratio 0.20 & steps 100/200 & +40.3\% & +325.3\% & +126.3\% \\
\quad ratio 0.32 & steps 100/200 & +72.9\% & +403.9\% & +158.1\% \\
\quad ratio 0.50 & steps 100/200 & +110.4\% & +356.0\% & +145.1\% \\
\bottomrule
\end{tabular*}

\end{table}

\begin{table}[ht]
\centering
\footnotesize
\caption{\textbf{Three-seed replication at the stable rate.} Success deltas
are against each seed's own $\thetaz$ (250 rollouts); losses from the edit
reports. Ratio 0.50 moves success toward the fleet on all three seeds and is
guard-stopped at 150/200 steps on all three --- not budget-matched, not
deployable. The dose-resolved MIA ladder exists for s1000; the r050
endpoint is measured on all three seeds ($0.964/0.981/0.985$).}
\label{tab:pushtseeds}
\begin{tabular}{@{}llcccc@{}}
\toprule
seed & operator & $\Delta$success (pts) & retain & forget & steps \\
\midrule
s1000 & ascent @0.20 (frozen) & -0.4 & +6.1\% & +43.1\% & 200/200 \\
s1000 & ascent @0.50 & +8.0 & +42.7\% & +180.9\% & 150/200 \\
s1000 & FT200 & +1.2 & +2.2\% & +2.8\% & 200/200 \\
\addlinespace[2pt]
s2000 & ascent @0.20 (frozen) & +3.6 & +11.3\% & +40.4\% & 200/200 \\
s2000 & ascent @0.50 & +11.6 & +33.8\% & +141.0\% & 150/200 \\
s2000 & FT200 & -1.2 & +3.6\% & +3.7\% & 200/200 \\
\addlinespace[2pt]
s3000 & ascent @0.20 (frozen) & +1.6 & +19.1\% & +77.5\% & 200/200 \\
s3000 & ascent @0.50 & +2.4 & +68.7\% & +249.6\% & 150/200 \\
s3000 & FT200 & -6.4 & +5.0\% & +1.5\% & 200/200 \\
\bottomrule
\end{tabular}

\end{table}

\section{Simulation arms}
\label{app:sim}
\emph{PushT (diffusion, keypoints).} Base data: the 206-episode
\texttt{pusht\_keypoints} set. Mirror contamination: 80 source episodes
(seed 20260813) reflected $x \to 512{-}x$ on agent, action, and all 8 T
keypoints with slot permutation $[1,0,3,2,5,4,7,6]$; the permutation is
validated data-only --- mirror+perm of real frames is rotation-congruent
(det $=+1$) to real labeled frames at Kabsch residual 0.0000. Mirrored
episodes carry \texttt{next.reward}${=}0$ / \texttt{next.success}${=}$False by
construction (never read by BC; recorded for schema fidelity). $\thetaz$
trains on all 286 (3 seeds); the ceiling on the 206 clean episodes (5 seeds);
both share one norm-stats provenance chain. Gate: Table~\ref{tab:pushtgate}.
Non-member synthesis for the membership audit mirrors clean episodes drawn
disjoint from both the forget sources and the operator's calibration set
(Sec.~\ref{sec:generalize}).

\emph{Second auditor (calibrated, two-sided).} A reference-model attack in
the LiRA family scores each member episode by its loss $z$-scored against
the five retrain seeds, so an episode hard for \emph{every} model stops
counting as membership signal --- per-episode calibration, two-sided by
construction (outcome (3) is visible to it). On the measured artifacts:
every $\thetaz$ seed reads mean $z \approx -7$ ($100\%$ of member episodes
beyond $|z|{>}1.96$, all on the member-like side; leave-one-out retrain
nulls span $-1.1$ to $+3.4$), FT is unchanged, and the exploratory high
dose moves the signal only to $-5.7$ --- corroborating, with a second
auditor family, both the frozen-dose inertness and the still-flagged high
dose. On ACT (five reference retrains) the same instrument reads
$\thetaz$ $-14.4$, FT $-14.9$, redirect $-10.8$/$-10.1$ --- $100\%$ of
member episodes flagged member-like, so redirect's masking survives
per-episode calibration --- while the primary lineage's ascent ladder
steps \emph{over} the null without landing: $-11.4 \to -3.1$ (B200) $\to +10.3$ (B300, $100\%$
flagged on the overshoot side) $\to +14.7$; leave-one-out retrain nulls
span $-1.7$ to $+2.0$. The instrument ships for the $\pi_{0.5}$
per-episode artifacts as well (\texttt{mia\_second\_attack.py}).

\emph{robomimic-BC (Can/Square, MH).} corr60: $\thetaz$ trains on 240
demonstrations of which 60 carry the constructed worse-operator contamination;
calib and audit holdouts are disjoint. The natural split uses observed
worse-operator labels instead. Fig.~\ref{fig:envs} shows both environments at
real demonstration states.

\begin{figure}[ht]
\centering
\includegraphics[width=0.36\linewidth]{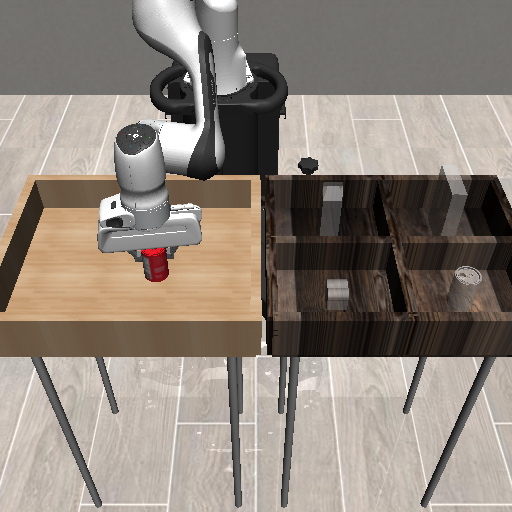}\hspace{0.02\linewidth}
\includegraphics[width=0.36\linewidth]{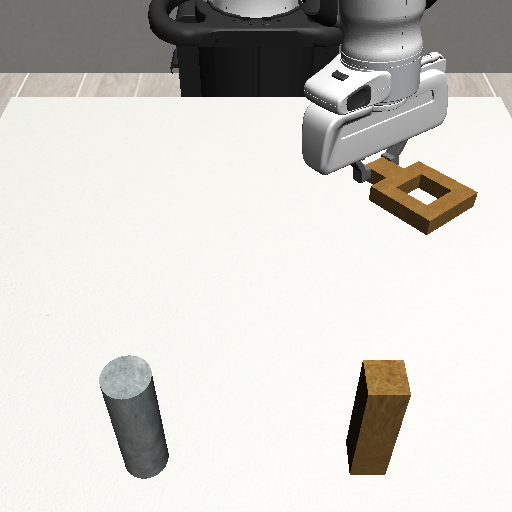}
\caption{\textbf{The robomimic-BC environments}, rendered by robomimic itself
at the mid-episode state of each dataset's first demonstration (fixed rule,
nothing hand-picked): PickPlaceCan (left) and NutAssemblySquare (right).}
\label{fig:envs}
\end{figure}

\begin{table}[ht]
\centering
\caption{\textbf{PushT gate, as fleets}: 250 rollouts per seed. The
contaminated \emph{recipe} sits 14.8 points below the retrain recipe
(seed-level Welch $t{=}3.78$, $p\approx0.018$); every $\thetaz$ seed falls
below every ceiling seed.}
\label{tab:pushtgate}
\footnotesize
\setlength{\tabcolsep}{4pt}
\begin{tabular}{@{}llc@{}}
\toprule
 & success per seed (\%) & mean $\pm$ SD \\
\midrule
ceiling (5 retrains) & 75.6 / 76.0 / 70.4 / 64.4 / 76.8 & $72.6 \pm 5.25$ \\
$\thetaz$ (3 seeds)  & 52.4 / 58.0 / 63.2               & $57.9 \pm 5.40$ \\
\bottomrule
\end{tabular}
\end{table}

\section{Redirect-feasibility diagnostic}
\label{app:feas}
The frozen diagnostic (config and outputs archived in
\path{redirect_feasibility_20260814.json}) scores every editable forget
frame on two channels: action discrepancy against the clean action manifold
(leave-one-out calibrated on clean episodes) and state-support distance to
the clean state distribution. Branch-redirect requires frames high in the
first and low in the second; Fig.~\ref{fig:feasibility} shows mirror
contamination supplies almost none, and the 5\% applicability bar (set from
ACT's measured 8.6\% BRANCH fraction, fixed by dated amendment before the
intersection was computed) fails at every detector calibration. Recorded
decision: fail localized redirect on mirror contamination; report the
all-frames variant only as a global mode-rewriting scope ablation.

\begin{widefigure}[ht]
\centering
\includegraphics[width=0.98\linewidth]{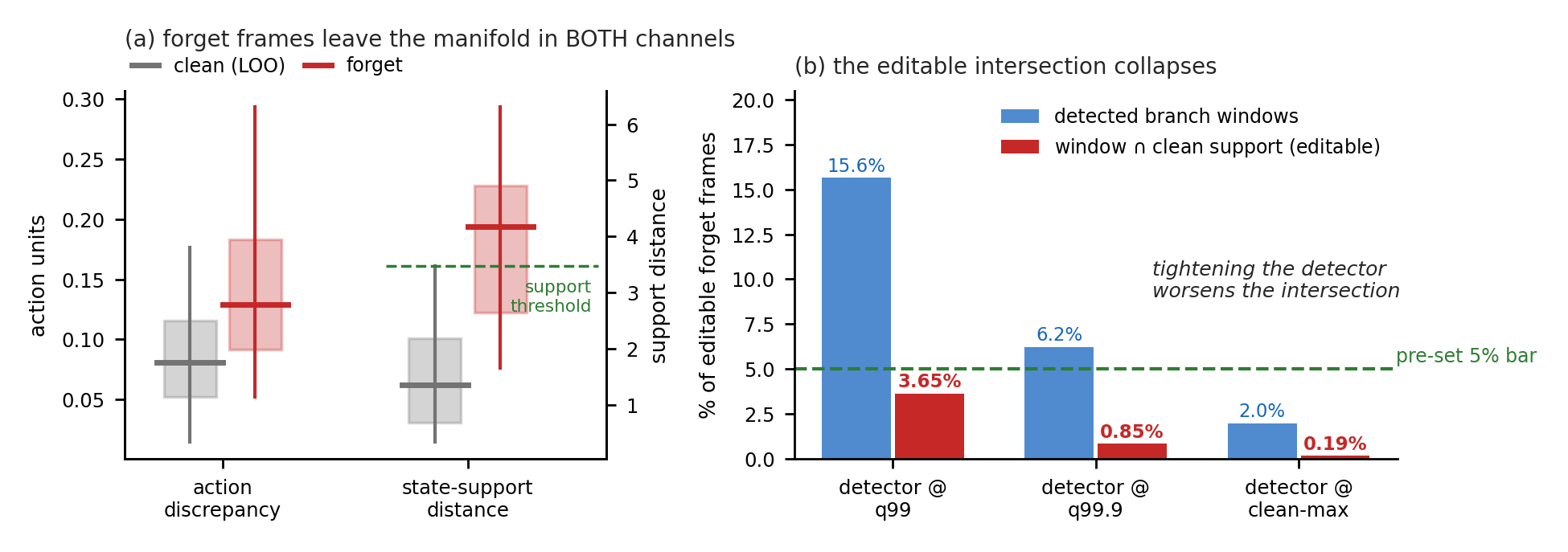}
\caption{\textbf{Why localized redirect has nothing to edit under
whole-episode contamination.} \emph{(a)} Quantile summaries (q05--q95
whiskers, q25--q75 boxes, q50 bars) of the two gating channels: forget frames
sit high in action discrepancy \emph{and} in state-support distance at once
--- state and action leave the clean distribution together. \emph{(b)}
Detected branch-window fraction vs.\ the support-gated editable intersection
at the three detector calibrations: 15.7$\to$3.65\%, 6.2$\to$0.85\%,
2.0$\to$0.19\% --- under the pre-set 5\% bar everywhere, and tightening the
detector monotonically worsens it. From the archived diagnostic JSON; no
thresholds recomputed.}
\label{fig:feasibility}
\end{widefigure}

\section{robomimic-BC: 14 methods, the fairness sweep, and the blindness
ledger}
\label{app:robomimic}
\label{sec:baselines}
Recovery is the fraction of the $\thetaz\!\to$retrain closed-loop success gap
recovered (negative = degradation below $\thetaz$); AUC-dist is the distance
of the membership AUC from the retrain null (0 = evidence at its target).
Redirect (there ``DIFT-pF'': $\lambda_{\text{prox}}{=}3$,
$\lambda_{\text{anchor}}{=}3$, calibrated stop) is the only method with
positive recovery; every alternative degrades closed-loop success by
68--101\% \emph{regardless of its offline forget quality}. Identification
(Stage 1, loss scorer): $P@60 = 1.000$ on constructed corr60 vs.\ $0.43$ on
natural worse-operator labels; on zero-mean action noise plain fine-tuning
wins outright, which scopes noise-type corruption out of the paper.

\begin{table}[ht]
\centering
\scriptsize
\setlength{\tabcolsep}{3pt}
\caption{\textbf{The 14-method comparison} (Can corr60, identical budgets,
500-rollout evals over gate-passing seeds).}
\label{tab:fourteen}
\begin{tabular}{@{}lccl@{}}
\toprule
method & recovery & AUC-dist & note \\
\midrule
redirect (ours) & $+12$ to $+33\%$ & 0.210 & only positive-recovery method \\
fine-tune $\retain$ & $-3.5\%$ & 0.249 & control; seed range $-36$ to $+49$ \\
EU-$k$ \citep{goel2022evaluating} & $-101\%$ & \textbf{0.008} & best evidence score; policy destroyed \\
Blindspot \citep{tarun2023deep} & $-97\%$ & 0.097 & prior regression-unlearning method \\
RMU & $-101\%$ & 0.109 & \\
NegGrad+ & $-101\%$ & 0.109 & \\
Fisher \citep{golatkar2020eternal} & $-101\%$ & 0.165 & \\
SimNPO & $-85\%$ & 0.156 & \\
NPO & $-68\%$ & 0.135 & closest mechanism sans calibration \\
SSD & $-98\%$ & 0.157 & \\
task-vector & $-94\%$ & 0.391 & moves \emph{away} from the null \\
GA / SCRUB \citep{kurmanji2023scrub} / relabel & $-46$ to $-130\%$ &
0.025--0.8 & earlier batch, same pattern \\
\bottomrule
\end{tabular}

\end{table}

\begin{table}[ht]
\centering
\footnotesize
\setlength{\tabcolsep}{4pt}
\caption{\textbf{The 32-configuration fairness sweep} (per-method
hyperparameter search plus our proximity tether transplanted into every
baseline; finalists at 500 rollouts, 2 seeds; saliency-masked unlearning
\citep{fan2024salun} was swept in the same grid). No fairly-tuned baseline
reaches positive recovery; the tether helps monotonically and rescues none;
the offline/closed-loop dissociation survives tuning (NegGrad+{+}tether:
AUC-dist 0.024, dead at $-63\%$).}
\label{tab:fairness}
\begin{tabular}{@{}lccc@{}}
\toprule
best-tuned baseline & mean SR & recovery & AUC-dist \\
\midrule
redirect (ref) & --- & $+12$ to $+33\%$ & 0.21 \\
NPO+tether(10) & 0.168 & $-31\%$ & 0.119 \\
NPO+tether(3) & 0.128 & $-41\%$ & 0.108 \\
SimNPO & 0.118 & $-53\%$ & 0.070 \\
NegGrad+{+}tether(10) & 0.068 & $-63\%$ & \textbf{0.024} \\
SSD & 0.070 & $-63\%$ & 0.218 \\
GA+tether(10) & 0.047 & $-70\%$ & 0.040 \\
Blindspot & 0.035 & $-77\%$ & 0.098 \\
\bottomrule
\end{tabular}

\end{table}

\begin{widetable}[ht]
\centering
\caption{\textbf{The blindness ledger}: standard audits disagreeing on one
checkpoint. Every cell is measured; rows 1--6 are the cross-class exhibits
of Sec.~\ref{sec:results}, rows 7--13 extend them. The PushT @0.50 row is
exploratory (Sec.~\ref{sec:generalize}).}
\label{tab:ledger}
\scriptsize
\setlength{\tabcolsep}{4pt}
\begin{tabular}{@{}p{0.16\textwidth}p{0.25\textwidth}p{0.27\textwidth}p{0.23\textwidth}@{}}
\toprule
checkpoint & audit A says & audit B says & mechanism \\
\midrule
ACT c50 redirect & behavior: 24.9\% of executed-slice gap closed & evidence:
AUC 1.000 $=\thetaz$ & masking, not deletion \\
ACT c50 ascent (primary lineage) & rank audit stops flagging by
200--400 steps & behavior never crosses; \texttt{mem/null} 1.6--1.8 &
rank acceptance through destructive overshoot \\
$\pi_{0.5}$ redirect (robot) & passes all three behavior gates; success
17/20 & per-demo attack: rank AUC 1.000 $=\thetaz$; \texttt{mem/null} 0.34 &
masking recurs cross-class (3-seed null) \\
ACT-c100 ascent & rank MIA at null (0.650 vs 0.653) & $+58\%$ past deployed
null & rank statistics blind to overshoot \\
DP ascent-200 & 32$\times$ loss-gap closure vs FT & zero action movement &
loss space $\neq$ behavior space \\
PushT ascent @0.50 (exploratory) & success $+8.0/{+}11.6/{+}2.4$ pts (3
seeds) & retain damaged $+34$--$69\%$; rank still flags all seeds
(0.96--0.99) & repair by asymmetric damage \\
\midrule
robomimic EU-$k$ & evidence at its target (AUC-dist 0.008) & closed-loop
$-101\%$ & the evidence axis cannot see lobotomy \\
robomimic SCRUB & near-perfect null-match (0.025) & closed-loop $-100\%$ &
forgetting vs.\ destruction, invisible to loss audits \\
robomimic NegGrad+{+}tether & AUC-dist 0.024 after fair tuning & $-63\%$
closed loop & tuning does not repair the dissociation \\
robomimic task-vector & AUC-dist 0.391, past the null & $-94\%$ closed loop &
negative erasure: audit flags editing, not cleanliness \\
DP FT500 & closes 5.9\% of the loss gap (vs.\ 0.19\% at 200) & zero action
movement & FT ``forgetting'' grows with budget \\
PushT ascent @0.10 & forget loss $+7.8\%$ & behavior $-2.4$ pts; rank moves
$-0.2\%$ & loss space moves before any audit does \\
ACT ascent, seeds b/c & $65$--$74\%$ of forget-loss gap closed, full
budget & rank still flags (0.86--0.91); absolute at/near null; BRANCH
conduct moves \emph{toward} the retrain (B300, highest banked rung) &
the training draw, not the budget, sets both endpoints \\
\bottomrule
\end{tabular}
\end{widetable}

\section{Hardware appendix}
\label{app:hw}
\emph{Film-strip (fixed rule).} Fig.~\ref{fig:filmstrip}: frames at
identical fractions of each episode's own duration plus the recorded terminal
frame, trial 0 (the same initial position in every condition by the paired
protocol); no frame chosen by eye.

\begin{widetable}[ht]
\centering
\caption{\textbf{Robot trials by condition, blind-scored} (200 anonymized
shuffled clips, sealed mapping opened only after scoring): success with
Wilson 95\% CIs, exact McNemar vs.\ each arm's ceiling paired by initial
position, and the dominant failure mode. Live keys agreed with the blind
labels on 199/200 episodes (blind authoritative). DP hardware is excluded
from this version.}
\label{tab:trials}
\footnotesize
\begin{tabular}{@{}llcccl@{}}
\toprule
arm & condition & success & Wilson 95\% CI & McNemar vs.\ ceiling & dominant failure mode \\
\midrule
ACT & ceiling & 20/20 & [0.84, 1.00] & --- & --- \\
ACT & redirect & 18/20 & [0.70, 0.97] & $b{=}2$, $c{=}0$, $p{=}0.500$ & dropped\_short (1) \\
ACT & FT & 9/20 & [0.26, 0.66] & $b{=}11$, $c{=}0$, $p{<}0.001$ & sway\_and\_knock (8) \\
ACT & ascent & 7/20 & [0.18, 0.57] & $b{=}13$, $c{=}0$, $p{<}0.001$ & sway\_and\_knock (10) \\
ACT & $\thetaz$ & 5/20 & [0.11, 0.47] & $b{=}15$, $c{=}0$, $p{<}0.001$ & sway\_and\_knock (14) \\
\addlinespace[2pt]
$\pi_{0.5}$ & ceiling & 19/20 & [0.76, 0.99] & --- & dropped\_short (1) \\
$\pi_{0.5}$ & redirect & 17/20 & [0.64, 0.95] & $b{=}3$, $c{=}1$, $p{=}0.625$ & dropped\_short (3) \\
$\pi_{0.5}$ & FT & 18/20 & [0.70, 0.97] & $b{=}2$, $c{=}1$, $p{=}1.000$ & dropped\_short (1) \\
$\pi_{0.5}$ & ascent & 8/20 & [0.22, 0.61] & $b{=}12$, $c{=}1$, $p{=}0.003$ & sway\_and\_knock (9) \\
$\pi_{0.5}$ & $\thetaz$ & 2/20 & [0.03, 0.30] & $b{=}17$, $c{=}0$, $p{<}0.001$ & sway\_and\_knock (17) \\
\addlinespace[2pt]
\bottomrule
\end{tabular}

\end{widetable}

\begin{widefigure}[ht]
\centering
\includegraphics[width=0.8\linewidth]{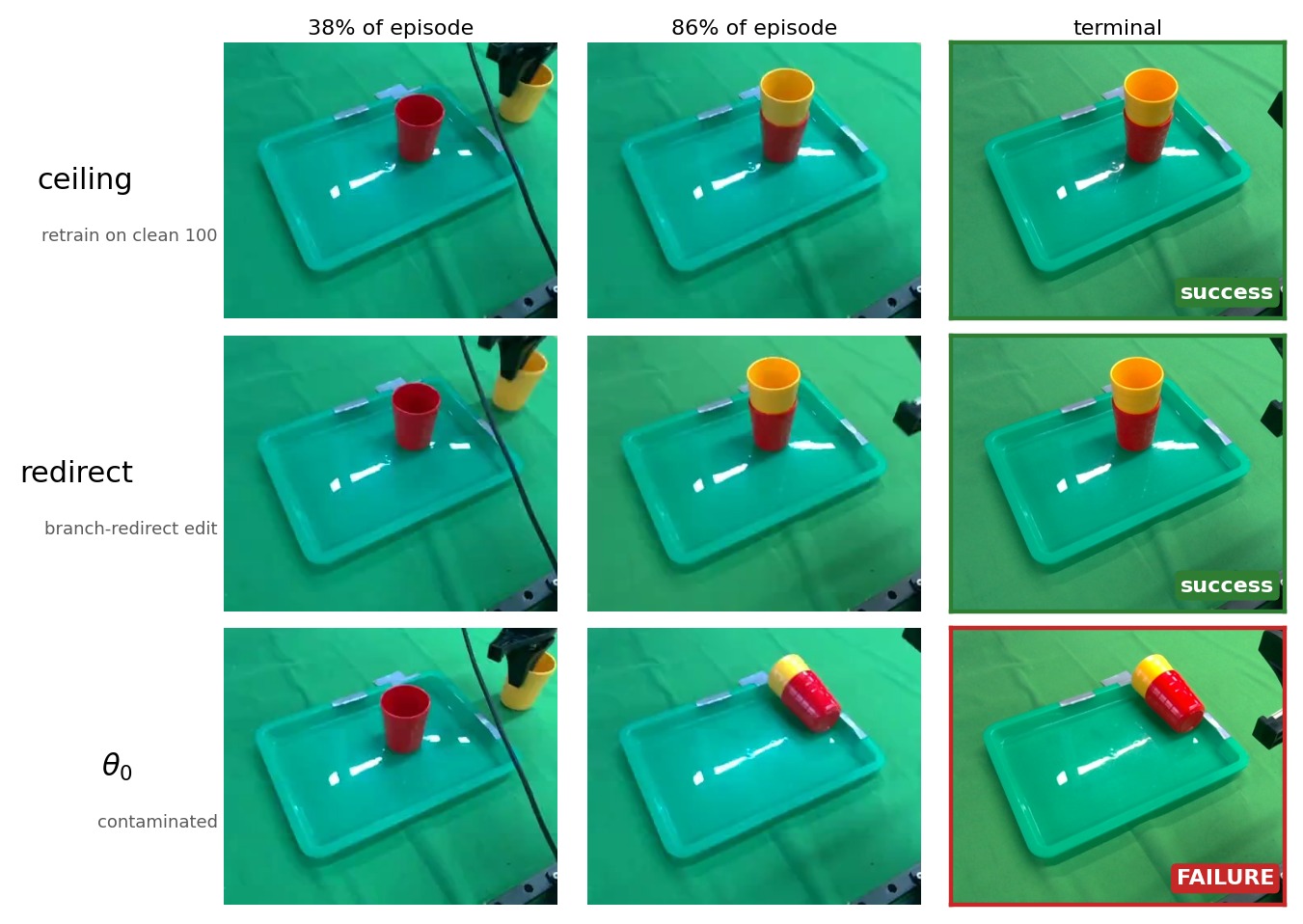}
\caption{\textbf{$\pi_{0.5}$ hardware film-strip}, position-matched by the
paired protocol (trial 0): the retrain seats the cup, the redirect edit
matches it, $\thetaz$ topples the assembly. Frames at fixed fractions of each
episode's duration; terminal frame recorded by the rig.}
\label{fig:filmstrip}
\end{widefigure}

\label{app:gaps}
\emph{Per-arm gaps at this version.}
ACT: all hardware outcomes are blind-scored (Table~\ref{tab:trials}); the
$\thetaz$ valid-20 rule (Sec.~\ref{sec:dissoc}) was fixed before scoring.
$\pi_{0.5}$: the per-demonstration MIA is measured (rank 1.000 on $\thetaz$
and every edit; \texttt{mem/null} 0.10--0.44) against a 3-seed LoRA
re-fit null ($0.558 \pm 0.012$); the arm's behavioral evaluation remains
one task family at $n{=}20$. DP: offline cell only in this version; its
$\thetaz\!\to$floor BRANCH gap sits within the floor's own pair spread, so
closure fractions are undefined on that arm. PushT: the high-dose
\emph{ladder} is dose-resolved on s1000 only, but its r050 endpoint is
measured on all three seeds; non-members are synthesized mirrors
(validated by the null-construction check). robomimic: fairness finalists
are on 2 seeds. ACT: evidence statistics carry the frame-draw noise
decomposition of Sec.~\ref{sec:offline}.

\begin{widetable}[ht]
\centering
\caption{\textbf{The two-axis matrix}, one block per policy class, each
block read by its own pre-declared instruments
(Table~\ref{tab:conventions}): behavior --- \% of the
$\thetaz\!\to$reference gap closed, on the counterfactual-conduct axis by
offline action gap (ACT) and on the closed-loop \emph{manifestation}
channel by success (PushT; $\pi_{0.5}$ blind-scored). The channels
dissociate on the same checkpoint ($\pi_{0.5}$ FT, Sec.~\ref{sec:hw}),
so a manifestation cell never stands in for a conduct claim; evidence ---
rank AUC vs.\ that arm's pooled retrain null, difference from the null, and
the absolute \texttt{mem/null} level. A qualitative cross-arm ledger, not a
quantitatively uniform comparison: cells compare within a block, never
across blocks. Auto-generated from the measured artifacts by
\texttt{table1\_matrix.py}; a cell without a measurement would print a red
placeholder rather than a number. $\pi_{0.5}$ evidence cells were measured
2026-08-17 (per-episode artifacts in the release archive); DP has no closure
instrument, so its block is absent (Appendix~\ref{app:gaps}).}
\label{tab:matrix}
\footnotesize
\begin{tabular}{@{}lcccc@{}}
\toprule
& \multicolumn{1}{c}{behavior} & \multicolumn{3}{c}{evidence} \\
\cmidrule(lr){2-2}\cmidrule(lr){3-5}
& \% of gap closed & rank AUC & vs.\ null & \texttt{mem/null} \\
\midrule
\multicolumn{5}{@{}p{\dimexpr\linewidth-2\tabcolsep}@{}}{\textbf{ACT (chunk-50)} \footnotesize primary $\thetaz$ lineage (seed replication: Sec.~5.1); behavior = executed-slice BRANCH closure vs.\ 3-retrain floor (conduct axis; full-plan: App.~\ref{app:floors}); retrain null AUC $0.639$}\\[1pt]
\quad $\thetaz$ & +0.0 & 1.000 & +0.36 & 0.22 \\
\quad redirect R200 & +24.9 & 1.000 & +0.36 & 0.40 \\
\quad redirect R400 & +23.4 & 0.990 & +0.35 & 0.44 \\
\quad ascent B200 & -57.8 & 0.650 & +0.01 & 0.84 \\
\quad ascent B400 & -182.3 & 0.533 & -0.11 & 1.83 \\
\quad FT200 & +3.4 & 1.000 & +0.36 & 0.19 \\
\addlinespace[3pt]
\multicolumn{5}{@{}p{\dimexpr\linewidth-2\tabcolsep}@{}}{\textbf{diffusion-PushT (s1000)} \footnotesize behavior = success vs.\ 5-retrain fleet, 250 rollouts (manifestation channel, not matched-state conduct); retrain null AUC $0.534$}\\[1pt]
\quad $\thetaz$ & +0.0 & 0.981 & +0.45 & 0.11 \\
\quad ascent @0.20 (frozen) & -2.0 & 0.979 & +0.45 & 0.15 \\
\quad ascent @0.50 (exploratory) & +39.5 & 0.964 & +0.43 & 0.27 \\
\quad FT200 & +5.9 & 0.980 & +0.45 & 0.11 \\
\addlinespace[3pt]
\multicolumn{5}{@{}p{\dimexpr\linewidth-2\tabcolsep}@{}}{\textbf{$\pi_{0.5}$ (robot checkpoints)} \footnotesize behavior = blind-scored success closure vs.\ ceiling ($n{=}20$ paired; manifestation channel --- offline conduct: Sec.~5.2); \texttt{mem/null} = within-model ratio vs.\ the 3-seed ceiling pool; retrain null AUC $0.558$}\\[1pt]
\quad $\thetaz$ & +0.0 & 1.000 & +0.44 & 0.10 \\
\quad redirect (robot) & +88.2 & 1.000 & +0.44 & 0.34 \\
\quad ascent (robot) & +35.3 & 1.000 & +0.44 & 0.44 \\
\quad FT (robot) & +94.1 & 1.000 & +0.44 & 0.34 \\
\bottomrule
\end{tabular}

\end{widetable}

\begin{figure}[ht]
\centering
\includegraphics[width=\linewidth]{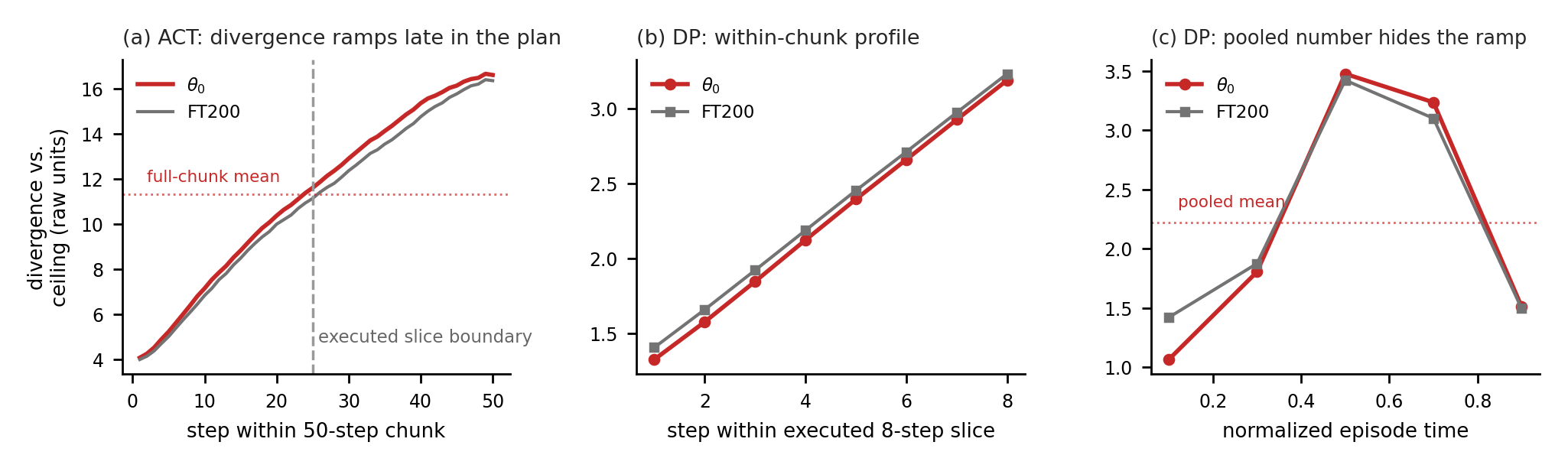}
\caption{\textbf{Where divergence lives in time --- pooled magnitudes hide
it.} \emph{(a)} ACT $\thetaz$-vs-ceiling divergence by step within the
50-step chunk (pooled forget states): the profile ramps $4\to17$ raw units
across the plan, which is why the full-chunk mean and the executed 25-step
mean support different closure fractions (Sec.~\ref{sec:offline}) while the
operator ordering is unchanged. \emph{(b)} DP within its executed 8-step
slice. \emph{(c)} The same DP divergence re-binned over normalized episode
time: it concentrates mid-episode at $1.6\times$ the pooled mean. The FT
control tracks $\thetaz$ almost exactly in every panel --- budget-matched
fine-tuning moves essentially nothing. From the banked temporal profiles
(\texttt{act\_floor\_c50\_raw.json}, \texttt{dp\_floor\_report.json}).}
\label{fig:manifest}
\end{figure}

\clearpage
\section{Rank-audit invariance}
\label{app:prop}
\begin{proposition}[Rank audits are blind to absolute overshoot]
\label{prop:rank}
Suppose an auditor's verdict depends on the scores $\{s(x_i)\}$ of the
audit pool only through their ordering, as any ROC/AUC statistic does. Then
for every strictly increasing $g$ the scores $g \circ s$ produce the same
verdict, so two edited models whose score orderings agree receive identical
rank verdicts even when their absolute member levels differ arbitrarily. In
particular ``member loss at the retrain null'' and ``member loss past it''
(outcome 3) can be rank-indistinguishable; absolute calibration is a
separate, necessary check.
\end{proposition}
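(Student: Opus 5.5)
The plan is to formalize ``depends only through the ordering'' and then reduce everything to the fact that a strictly increasing map preserves every pairwise comparison. Write the audit pool as members $M=\{x_1,\dots,x_m\}$ and non-members $N=\{y_1,\dots,y_n\}$. A rank verdict is a function $V$ of the relative order of the scores: equivalently, $V(s)=F(\sigma_s)$, where $\sigma_s$ records, for every pair of pool points $(u,v)$, whether $s(u)<s(v)$, $s(u)=s(v)$, or $s(u)>s(v)$. The AUC is the canonical example,
\[
\mathrm{AUC}(s)=\frac{1}{mn}\sum_{i,j}\Bigl(\mathbf{1}[s(x_i)<s(y_j)]+\tfrac12\,\mathbf{1}[s(x_i)=s(y_j)]\Bigr),
\]
with the loss-based convention that lower member loss signals membership. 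It depends on $s$ only through these indicators, so it fits the hypothesis, and so does any threshold rule applied to it.

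\textbf{Step 1 (invariance).} For strictly increasing $g$, $a<b \iff g(a)<g(b)$, and $a=b \iff g(a)=g(b)$ because $g$ is injective. Hence $\sigma_{g\circ s}=\sigma_s$ and $V(g\circ s)=V(s)$. More generally, two edited models with scores $s_1,s_2$ whose orderings on the pool agree satisfy $\sigma_{s_1}=\sigma_{s_2}$, and therefore receive identical verdicts. This holds regardless of how their absolute levels compare.

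\textbf{Step 2 (arbitrary absolute difference).} Fix a model with per-demonstration member mean loss $\bar\ell_M$ and retrain-null member level $\bar\ell^*$, and suppose \texttt{mem/null}$=\bar\ell_M/\bar\ell^*=1$. Take $g(t)=ct$ with $c>1$, or $g(t)=t+d$ with $d>0$. Both are strictly increasing, and both give \texttt{mem/null}$=c$ or $1+d/\bar\ell^*$ respectively. Letting $c$ or $d$ vary shows the ratio can be made to exceed $1$ by any prescribed amount while the AUC and every other rank verdict stay unchanged. This realizes outcome (3) rank-indistinguishably from ``at the null'', and it is exactly the monotone inflation of both populations described in Sec.~\ref{sec:outcomes}.

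\textbf{Step 3 (necessity of absolute calibration).} Because the two configurations share a rank verdict but differ in \texttt{mem/null}, no function of $\sigma$ alone can separate them. The absolute level is therefore information that a rank audit does not carry, and it must be checked separately.

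The only subtle point is making precise that the two score vectors are realizable by edited models. The proposition only needs existence at the level of score vectors, so I would state the claim about verdict functions of score vectors and note that the existence of a model realizing $g\circ s$ is not required for the invariance itself. Ties are handled by the injectivity remark in Step 1.
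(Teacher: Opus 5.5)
Your proof is correct and follows the paper's argument. The paper likewise writes AUC as a functional of member--non-member pairwise comparisons and notes that a strictly increasing $g$ preserves every such comparison, hence every rank functional; your affine choices of $g$ in Step 2, which move \texttt{mem/null} from $1$ to any prescribed value above $1$ at fixed AUC, make concrete the paper's closing remark that location on the score axis is unrecoverable from ranks.
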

\emph{Proof.} $\mathrm{AUC} = \Pr[s(m) > s(n)] + \tfrac{1}{2}\Pr[s(m) =
s(n)]$ over member--non-member pairs. A strictly increasing $g$ preserves
every pairwise comparison, hence every rank functional; location on the
score axis is unrecoverable from ranks alone. \qed

Table~\ref{tab:evladder} realizes it in data: B200 and B300 sit on
opposite sides of the null in absolute level (\texttt{mem/null} 0.84 vs.\
1.59) while both rank CIs contain the null.

\section{The joint test, and where evidence lives}
\label{app:joint}

\emph{Joint conformal test.} The criterion of Sec.~\ref{sec:outcomes},
instantiated on ACT: coordinates are executed-slice BRANCH divergence to
the retrain fleet (raw units), $|\log \texttt{mem/null}|$ against the
5-seed pooled member level, and $|\mathrm{AUC} - 0.639|$. The fleet is
$K{=}19$ retrains of the identical recipe on the identical 100-episode
retain set, differing only in seed, and it nests the other instruments:
its first 3 seeds form the behavioral floor's pairs
(Sec.~\ref{sec:design}), its first 5 the evidence null, and all 19 the
conformal calibration set. Each replica is scored leave-one-out;
coordinates are standardized by the replicas' spread and reduced by
sup-norm. Every checkpoint is rejected at $p{=}0.050$ --- the floor
$1/20$ of a 19-replica fleet, and the attainable minimum because each
checkpoint is more atypical than every replica. Growing the fleet from
5 to 19 widened the calibration envelope (leave-one-out nonconformity
$0.38$--$3.32$, median $1.04$) while the rejection sharpened from
floor-limited $p{=}1/6$ to conventional significance: checkpoint
nonconformity is $16$--$57$, i.e.\ $4.9$--$17.2\times$ the most extreme
replica. The nonconformity column is reported for transparency,
\emph{not} as a calibrated ranking of failures.

\begin{table}[ht]
\centering
\caption{\textbf{Joint retrain-consistency, one test per checkpoint,
$K{=}19$ retrain replicas.} The replicas' own leave-one-out
nonconformity spans $0.38$--$3.32$; every checkpoint scores $16$--$57$
and is rejected at the fleet's attainable minimum $p{=}0.050$. Redirect
is nearest; ascent's high rungs are farthest --- rank acceptance and
joint rejection disagree on the same checkpoints.}
\label{tab:conformal}
\footnotesize
\setlength{\tabcolsep}{4pt}
\begin{tabular}{@{}lccccc@{}}
\toprule
 & \multicolumn{3}{c}{audit vector $z$} & & \\
\cmidrule(lr){2-4}
 & BRANCH div. & $|\log$ \texttt{m/n}$|$ & $|\Delta$AUC$|$ & $s$ & conformal $p$ \\
\midrule
$\thetaz$ & 4.00 & 1.54 & 0.34 & 31 & 0.050 \\
FT200 & 3.92 & 1.69 & 0.34 & 33 & 0.050 \\
redirect R200 & 3.47 & 0.92 & 0.34 & 18 & 0.050 \\
redirect R400 & 3.47 & 0.83 & 0.33 & 16 & 0.050 \\
ascent B50 & 3.97 & 0.97 & 0.33 & 21 & 0.050 \\
ascent B100 & 4.30 & 0.58 & 0.28 & 24 & 0.050 \\
ascent B200 & 5.25 & 0.18 & 0.01 & 32 & 0.050 \\
ascent B300 & 7.99 & 0.45 & 0.11 & 57 & 0.050 \\
ascent B400 & 7.95 & 0.60 & 0.13 & 57 & 0.050 \\
\bottomrule
\end{tabular}

\end{table}

\emph{Where evidence lives.} Region-resolved membership evidence: every
frame of all 30 member episodes scored under the same instrument as the
published audit, split by the frozen region labels, as \texttt{mem/null}
against the pooled per-region ceiling level. Predictions frozen before
scoring: redirect moves BRANCH strongly and non-BRANCH barely; ascent
moves every region; FT moves none. All three confirmed
(Table~\ref{tab:regions}).

\begin{table}[ht]
\centering
\caption{\textbf{Region-resolved \texttt{mem/null}.} Redirect reaches
the retrain null on the $8.6\%$ of frames it edits and nowhere else ---
the episode-mean statistic dilutes a real, local repair $12{:}1$. Ascent
(the overshooting lineage) overshoots every region; FT deepens
memorization slightly everywhere.}
\label{tab:regions}
\footnotesize
\setlength{\tabcolsep}{5pt}
\begin{tabular}{@{}lcccc@{}}
\toprule
region (frame share) & $\thetaz$ & FT200 & redirect R200 & ascent B300 \\
\midrule
PRE (46.6\%) & 0.33 & 0.32 & 0.52 & 2.31 \\
\textbf{BRANCH} (8.6\%) & 0.26 & 0.21 & 1.07 & 2.19 \\
POST (7.8\%) & 0.20 & 0.17 & 0.68 & 1.27 \\
TAIL (37.0\%) & 0.17 & 0.13 & 0.20 & 1.30 \\
\bottomrule
\end{tabular}

\end{table}

\emph{The ascent endpoint is a stopping-rule artifact.} The operator's
forget term raises member losses only toward $\tau$, a running quantile
of the policy's \emph{own} loss distribution, recomputed every 50 steps.
Across the three $\thetaz$ lineages (which start indistinguishable:
AUC $1.000$, \texttt{mem/null} $0.19$--$0.22$), $\tau$ saturates by
step 300 at $0.547/0.362/0.413$ against a null member level of $0.461$;
every run reached full budget with no guard trips. Overshoot occurs
exactly when the saturated $\tau$ passes the null --- one lineage
overshoots to $1.83$, one lands \emph{at} the null ($0.91$) with its
rank audit still flagging ($0.913$ vs.\ $0.639$; the mirror image of
the primary lineage's rank-accepted overshoot), one sits between
($1.19$). Banked conduct splits the same way: the primary lineage's
B300 moves BRANCH divergence away from the retrain ($-128\%$ of the
$\thetaz$ gap), the other two move toward it ($+63\%$, $+27\%$).
Loss-space movement again proxies nothing: $65$--$74\%$ of
the forget-loss gap closed buys $\le 0.14$ of rank AUC.

\emph{Parameter space.} At matched BRANCH states, the cosine between the
membership loss gradient and the counterfactual-proximity gradient is
$+0.66$ at $\thetaz$, $+0.30$ at the repaired R200 --- the two audits
constrain nearly orthogonal directions at exactly the checkpoint where a
localized repair acted. (At B300 the cosine reads $+0.98$, but both losses
are far from both targets there, so the sign structure agrees mechanically;
that reading carries no information and is excluded from the claim.)

\section{Pre-registered predictions and outcomes (PushT arm)}
\label{app:prereg}
Frozen in the runbook before the first training started
(\texttt{PUSHT\_RUNBOOK\_5090B.md}; ``do not edit after the first training
starts''), reproduced verbatim, with the measured outcome after each.

\begin{itemize}
\item \textbf{P1 (expression).} \emph{``diffusion is mode-preserving, so
theta\_0 expresses the mirrored mode from nominal starts at a nonzero rate
(rollouts ending nearer the mirrored goal than the true goal), and its
success rate sits $\ge$10 pts below the ceiling with non-overlapping 95\%
Wilson CIs (this is the GATE).''} --- \textbf{Confirmed} (gate passed
2026-08-14): gap 14.8 points, seed-level Welch $t{=}3.78$, $p\approx0.018$,
every $\thetaz$ seed below every ceiling seed
(Table~\ref{tab:pushtgate}); \texttt{avg\_max\_reward} separates in the same
direction.
\item \textbf{P2 (redirect).} \emph{``branch-redirect recovers success toward
the ceiling; the recovered fraction is resolvable at n=250 (CI half-width
$\sim$$<$6 pts).''} --- \textbf{Prospectively not applicable}: the frozen
feasibility diagnostic (Appendix~\ref{app:feas}) finds a 3.65\% editable
intersection against the pre-set 5\% bar, so no redirect edit ran; P2 was
not evaluated by silently substituting an all-frames operator (dated
amendment, 2026-08-14).
\item \textbf{P3 (ascent).} \emph{``delivered ascent (norm-PCGrad) erases
membership evidence with budget but moves behavior away from the retrain
counterfactual --- the c50 signature, replicated prospectively.''} ---
\textbf{Refuted at the frozen dose} on all three $\thetaz$ seeds: forget
loss moves $+40$--$78\%$ while rank AUC moves $\le0.003$, the absolute gap
$4$--$6\%$, and behavior $+1.6$ points pooled ($t{=}1.39$, inside the null).
By the runbook's own rule, a failed prediction is a finding
(Sec.~\ref{sec:generalize}).
\item \textbf{P4 (FT control).} \emph{``budget-matched fine-tuning on retain
moves neither axis.''} --- \textbf{Confirmed} (rank $-0.001$ to $-0.003$ per
seed; behavior $-2.1$ points pooled, $t{=}{-}0.95$).
\item \textbf{P5 (evidence).} \emph{``theta\_0 MIA AUC $\gg$ retrain null;
redirect stays at theta\_0's level (behavioral masking); the null comes from
$\ge$3 ceiling seeds.''} --- \textbf{Confirmed for $\thetaz$} ($0.981$--$0.997$
vs.\ a 5-seed null of $0.534\pm0.012$); the redirect clause was not evaluated
(see P2).
\end{itemize}

The post-gate operator-instantiation amendment (2026-08-14, recorded before
any redirect edit ran) is preserved verbatim in the runbook alongside the
frozen feasibility diagnostic's archived output.

\end{document}